\documentclass[11pt]{article}

\PassOptionsToPackage{table}{xcolor}

\usepackage[preprint]{acl}

\usepackage{times}
\usepackage{latexsym}
\usepackage[T1]{fontenc}
\usepackage[utf8]{inputenc}
\usepackage{microtype}
\usepackage{inconsolata}

\usepackage{graphicx}
\usepackage{subcaption}
\usepackage{booktabs}
\usepackage{amsfonts, amsmath, amssymb, amsthm}
\usepackage{mathtools}
\usepackage{ulem}
\usepackage{float}
\usepackage{algorithmic}
\usepackage[ruled,vlined]{algorithm2e}
\usepackage{color,soul}
\usepackage{multirow}
\usepackage[capitalize,noabbrev]{cleveref}
\usepackage[textsize=tiny]{todonotes}
\usepackage[most]{tcolorbox}
\usepackage{tabularx}
\usepackage{wrapfig}

\usepackage{amsmath,amsfonts,bm}

\def\eqref#1{equation~\ref{#1}}

\def\1{\bm{1}}

\DeclareMathAlphabet{\mathsfit}{\encodingdefault}{\sfdefault}{m}{sl}
\SetMathAlphabet{\mathsfit}{bold}{\encodingdefault}{\sfdefault}{bx}{n}

\newcommand{\E}{\mathbb{E}}

\newcommand{\nc}{\newcommand}
\nc{\dc}{\definecolor}
\dc{watpink}{RGB}{198, 0, 120}
\dc{editgreen}{RGB}{90, 150, 90}
\nc{\wpink}[1]{\textcolor{watpink}{#1}}
\nc{\yck}[1]{\wpink{[YCK: #1]}}

\nc{\const}{\mathrm{const}}

\newcommand{\meanstd}[2]{#1\,{\scriptsize$\pm$\,#2}}

\newcommand{\ph}[1]{\underline{\texttt{<#1>}}}

\newcolumntype{Y}{>{\centering\arraybackslash}X}
\definecolor{darkred}{rgb}{0.55, 0.0, 0.0}
\definecolor{baselinegray}{gray}{0.95}
\definecolor{methodgreen}{RGB}{235,245,235}
\definecolor{methodblue}{RGB}{235,242,252}

\theoremstyle{plain}
\newtheorem{theorem}{Theorem}[section]
\newtheorem{proposition}[theorem]{Proposition}

\theoremstyle{definition}

\theoremstyle{remark}

\title{Patients-like-me: A Variational LM--GNN Framework for Explainable Clinical Prediction}

\author{%
  \textnormal{Xinyu Wang}$^{1}$\thanks{Equal contribution. Emails:
  \texttt{xinyu.wang5@mail.mcgill.ca} and
  \texttt{yixuan.li2@mail.mcgill.ca}.}
  \quad
  \textnormal{Yixuan Li}$^{1}$\footnotemark[1]
  \quad
  \textnormal{Hanwei Wu}$^{2}$
  \quad
  \textnormal{Qincheng Lu}$^{1}$
  \\
  \textnormal{Chi-Kuang Yeh}$^{3}$
  \quad
  \textnormal{Xiao-Wen Chang}$^{1}$
  \quad
  \textnormal{Ziyang Song}$^{4}$\thanks{Corresponding author:
  \texttt{ziyangs@ohio.edu}.}
  \\
  \small
  $^{1}$McGill University
  \quad
  $^{2}$Université de Montréal
  \quad
  $^{3}$Georgia State University
  \quad
  $^{4}$Ohio University
}

\begin{document}
\raggedbottom

\maketitle

\begin{abstract}
Language models (LMs) offer strong textual representations for electronic health records (EHRs), but they encode patient sequences in isolation and provide limited explainability. Graph neural networks (GNNs) complement LMs by incorporating inter-patient relationships and enabling reference-patient attribution, yet they rely on high-quality patient representations. We propose \textbf{Patients-like-me (PLM)}, a unified LM–GNN framework that integrates local patient semantics with global cohort structure. To train PLM efficiently, we introduce a \textbf{Variational Expectation-Maximization} algorithm that alternates LM and GNN updates under a supervised variational objective. Extensive experiments on MIMIC-III and MIMIC-IV show that PLM consistently outperforms state-of-the-art methods, with improvements generalizing across encoder-only and decoder-only LM backbones. These gains are achieved with only modest additional computational overhead. PLM also provides reference-patient explanations by retrieving influential similar patients, while edge-masking experiments confirm that the highest-ranked references have the greatest impact on model predictions.
\end{abstract}

\section{Introduction}
\label{sec:intro}

Electronic health records (EHRs) contain heterogeneous patient records spanning one or more visits, including structured codes (e.g., diagnoses, procedures, medications) and unstructured free-text notes \citep{xu2022survey}. The accumulation of large-scale EHR data enables representation learning that supports accurate \emph{supervised} clinical prediction for clinical decision support, including readmission and length-of-stay (LOS) prediction as well as personalized medication recommendation \citep{ijcai2024p914,shang2019gamenet,yang2021safedrug}.

Language models (LMs) are a natural backbone for EHR modeling because they can encode structured codes and free-text notes into textual representations \citep{wornow2025contextclues}. Recent work therefore leverages LMs as patient-level encoders, mapping each patient's EHR sequence to a representation for downstream predictions \citep{makarov2025dtgpt,hegselmann2025llmsehr}. However, LM-based encoders are inherently \emph{local}, representing each patient sequence in isolation, thus overlooking \emph{global} cohort structure such as shared phenotypes and comorbidities. This hurt tasks that benefit from cohort-level signals, such as personalized drug recommendation, where clinically similar reference patients (i.e., ``patients like me'') provide evidence for prediction \citep{Alsentzer2025-em,Kauffman2025-zf}. For example, among patients with congestive heart failure, medication selection is often influenced by shared comorbidities such as kidney dysfunction or diabetes. In addition, LMs offer limited explainability, making it difficult to provide explanations linked to reference cases.

Graph neural networks (GNNs) complement LMs by modeling inter-patient relations on a text-attributed graph (TAG), where nodes represent patient sequences and edges reflect cohort similarity \citep{Yan2023CSTAG,Boll2024GNNsurvey}. These structural relationships preserve patient proximity, such that connected patients are more likely to exhibit similar clinical profiles. GNN message passing propagates cohort-level signals to refine patient representations, addressing the locality limitation of LMs. In addition, graph-based attribution methods can link predictions to influential neighboring cases, yielding reference-patient explanations for prediction \citep{ying2019gnnexplainer}. This motivates a unified LM–GNN framework in which LMs encode patient representations and GNNs refine them using global cohort structure.

Existing LM-GNN methods mainly adopt two-stage, end-to-end, or alternating optimization. Two-stage methods train the LM and GNN separately, offering scalability but preventing graph information from iteratively refining the LM~\citep{lm_graph_survey}. End-to-end methods jointly optimize both components, enabling tighter interaction but incurring higher memory and computational costs~\citep{huang2024can,hu2025large}. Alternating methods, such as GLEM~\citep{zhao2022glem}, update the LM and GNN in turn to balance interaction and efficiency, but are primarily developed for semi-supervised node classification with missing labels. Their formulation therefore does not address fully supervised patient-level clinical prediction.

In this work, we propose \emph{Patients-like-me} (PLM), a unified LM-GNN framework for clinical prediction. PLM formulates a \emph{Variational Expectation-Maximization} (VEM) algorithm that   alternates LM-GNN optimization under a supervised variational objective, allowing local patient semantics and global cohort structure to iteratively refine one another with modest computational overhead. Experiments on MIMIC-III and MIMIC-IV across readmission prediction, length-of-stay prediction, and drug recommendation show that PLM consistently outperforms existing clinical prediction models, LM-based baselines, and alternative LM-GNN training strategies. PLM is backbone-agnostic, improving both encoder-only and decoder-only models with only modest computational overhead. Beyond predictive performance, PLM provides reference-patient explanations that identify influential similar patients, with edge-masking experiments showing that the highest-ranked references have the strongest effect on predictions.

\section{Related Work}
\label{sec:related}

\subsection{Language Modeling for EHRs}

Modeling EHR sequences is central to clinical prediction.
EHRs consist of heterogeneous patient records, where each visit may contain structured clinical codes and free-text notes~\citep{xu2022survey}. Early probabilistic models are employed to uncover latent structure from EHRs~\citep{mixehr-seed}. Subsequent work uses deep neural networks to learn representations from patient records, including multi-layer perceptrons (MLPs) ~\citep{Miotto2016-iq}, recurrent networks~\citep{choi2016doctorai, pham2016deepcare}, convolutional networks~\citep{Ma2020-gf}, and attention-based models~\citep{RETAIN, GRAM}. Recently, Transformer-based LMs have become the dominant paradigm for encoding EHR data into contextual patient representations~\citep{lee2020biobert, huang2019clinicalbert, rasmy2021med, labrak2024biomistral}. These models span both  encoder-only architectures (e.g., BioBERT \citep{lee2020biobert}, BioClinical ModernBERT \citep{BCMBERT}) and decoder-only LMs (e.g., BioMistral, Meerkat) \citep{labrak2024biomistral, Meerkat}. However, these biomedical LMs primarily encode local textual representations~\citep{defilippo2024triage}, motivating the incorporation of inter-patient structure to complement these representations~\citep{Alsentzer2025-em, Kauffman2025-zf}.
\begin{figure*}[!t]
\centering
  \includegraphics[width=0.9\linewidth]{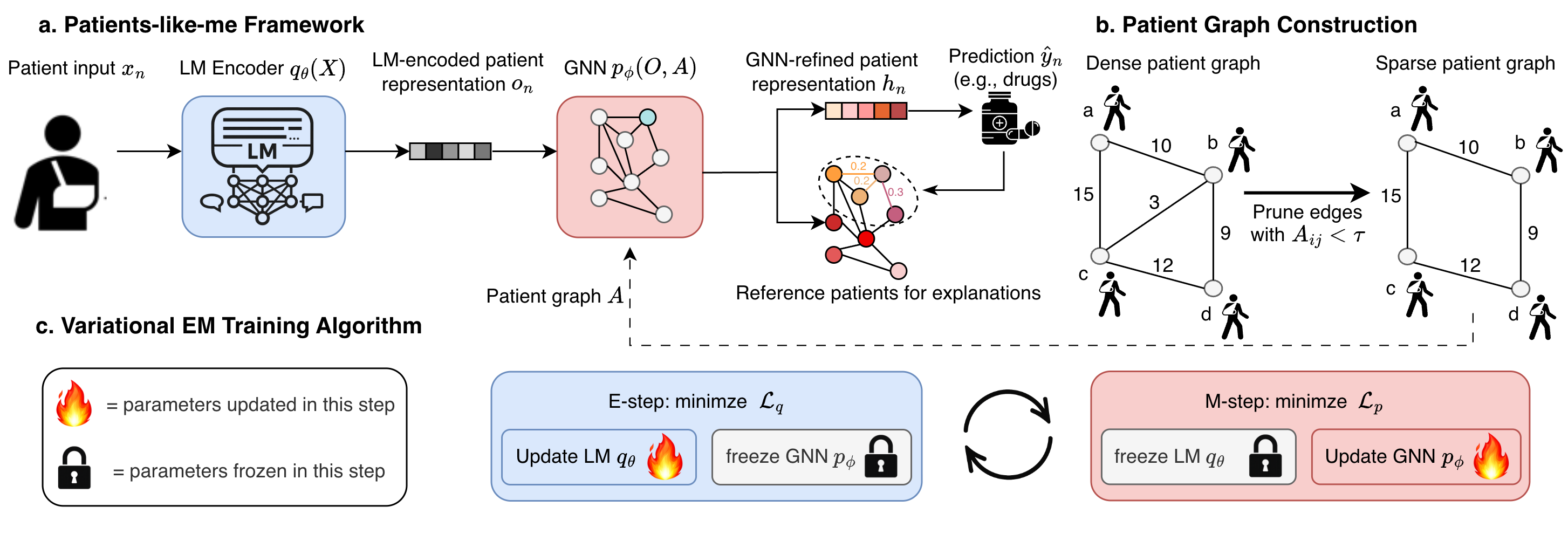}
  \caption{\textbf{Overview of Patients-like-me.} \textbf{(a)} The LM encodes each patient sequence into an embedding, which the GNN refines using patient graph \(A\) for prediction and reference patient explanations.  \textbf{(b)} The patient graph \(A\) retains edges with \(A_{ij}\ge\tau\). \textbf{(c)} The variational EM algorithm alternates between an E-step that updates the LM while freezing the GNN and an M-step that updates the GNN while freezing the LM.}
  \label{fig:outline}
  \vspace{-1\baselineskip} 
\end{figure*}

\subsection{Graph-Based Patient Modeling and LMs}
GNNs on patient graphs are widely used to exploit cohort structure for clinical prediction~\citep{Boll2024GNNsurvey,vaida2025multimodal}. Applying GNNs to EHRs requires mapping heterogeneous patient records to continuous node features \citep{gilmer2017neural}, motivating TAGs and textual representation learning for node initialization \citep{Yan2023CSTAG}. Prior work computes node embeddings from hand-crafted features or shallow text features (e.g., bag-of-words)~\citep{DBLP:conf/iclr/KipfW17}. Sequential models have been adopted to map patient sequences into node embeddings for GNNs~\citep{liu2021learning,rocheteau2021predicting,dong2023integrated}. Recent work uses biomedical LMs to produce semantically rich embeddings for GNNs~\citep{Chen2024IJCAI, graphcare2024}.
Most LM–GNN pipelines rely on two-stage or end-to-end optimization methods, but the former is often ineffective and the latter is not scalable to large data~\citep{golmaei2021deepnote,hu2025large}. Alternating methods iteratively update LMs and GNNs, but are tailored to semi-supervised node classification with partial labels ~\citep{zhao2022glem}. Beyond training, prior GNN explainability methods such as GNNExplainer~\citep{ying2019gnnexplainer}, PGExplainer~\citep{luo2020pgexplainer}, and SubgraphX~\citep{yuan2021subgraphx} provide edge- or subgraph-level attributions. In PLM, we leverage graph-based neighbor attribution to provide reference-patient explanations for clinical prediction.

\section{Methodology}


\subsection{Patients-like-me Architecture}
\label{subsec:Patients-Like-Me}

For clinical prediction, each patient $n$ is associated with an EHR sequence $x_n$ and a target label $y_n$. PLM couples a patient-level LM with a cohort-level GNN within a VEM framework (Fig.~\ref{fig:outline}a). We first serialize each EHR into a text prompt by concatenating demographics with chronologically ordered visits, where each visit includes co-occurring diagnosis, procedure, and medication codes.
The LM encodes the prompt into a patient embedding $o_n$, which initializes the GNN node feature as $h_n^{(0)} = o_n$. The GNN then propagates information over the patient graph to refine node representations $h_n^{(L)}$ for downstream prediction. Task-specific prompt templates are in Appendix~\ref{app:prompt}.

As illustrated in Fig.~\ref{fig:outline}b, we construct a weighted patient graph from the cohort. We represent it as a TAG $\mathcal{G}=(\mathcal{N}, A, x)$, where $\mathcal{N}$ is the set of patient nodes, $A \in \mathbb{R}^{N\times N}$ is the adjacency matrix, and each node is associated with a patient EHR sequence. We construct the patient-patient graph using diagnosis-code overlap, following prior work~\citep{Lu2021-re}. To prevent temporal leakage, edges for each prediction instance are computed only from diagnosis codes available within the corresponding input prefix.

Specifically, the edge weight $A_{ij}$ is defined as the number of diagnosis codes shared by patients $i$ and $j$. This graph captures coarse inter-patient comorbidity structure, while patient-specific details are encoded by the LM. To maintain sparsity and suppress noisy links, we retain an edge only if $A_{ij} \geq \tau$. We select $\tau$ on the validation set and use it across all tasks, as it provides the best trade-off between predictive performance and graph sparsity (Section~\ref{subsec:sensitivity}). We model the resulting weighted patient graph with a Graph Convolutional Network (GCN)~\citep{kipf2017semi}, which supports weighted message passing under standard GCN normalization. After the patient-level split, we construct separate training, validation, and test graphs within each split. Details on split-specific graph construction, test-time message passing, and edge-weight analysis are provided in Appendix~\ref{app:weighted_graph}.



\subsection{VEM Algorithm} 
\label{sec:VI}

As illustrated in Fig.~\ref{fig:outline}c, we present a \textit{pseudo-likelihood VEM algorithm} for PLM. We consider a probabilistic model in which patient sequences $x$ are observations and target labels $y$ are latent variables. Our goal is to infer the posterior distribution $p(y \mid x)$, which is generally intractable \citep{Fox2012-hy}. Variational inference approximates this posterior with a tractable family $q_\theta(y \mid x)$ and optimizes the evidence lower bound (ELBO) \citep{VI}:
$
    \mathrm{ELBO} = \E_{q_\theta(y \mid x)} \big[\log p_\phi (x, y) - \log q_\theta(y \mid x ) \big],$
where $\theta$ and $\phi$ denote the variational and model parameters, respectively. 
Following amortized variational inference \citep{vae}, we approximate the intractable posterior with a variational distribution $q_\theta(y \mid x)$ parameterized by an LM, which maps each patient sequence $x_n$ to a distribution over $y_n$ by a single forward pass. We use a mean-field variational family that factorizes over the latent variables,
\(
q_\theta(y \mid x) = \prod_{n=1}^{N} q_\theta(y_n \mid x_n),
\)
where the factorization is imposed only on the variational family $q_\theta$ for scalable amortized inference. We instantiate the distribution $p_\phi(y \mid x)$ with a GNN and omit the explicit dependence on the patient graph $A$.
We optimize the ELBO using a VEM algorithm that alternates between E-step and M-step.
In the E-step, we fix the GNN and update the LM by minimizing the KL divergence
\(
    \mathrm{KL}\bigl(q_\theta(y \mid x)\,\|\,p_\phi(y \mid x)\bigr)
=
\mathbb{E}_{q_\theta(y \mid x)}\!\left[\log q_\theta(y \mid x) - \log p_\phi(y \mid x)\right].
\)
This injects graph structural signals into the LM representations and yields a tighter ELBO.  In the M-step, we fix the LM and update the GNN by maximizing a pseudo-likelihood objective~\citep{Besag1975-zl},
\(
\mathbb{E}_{q_\theta(y \mid x)}\big[\log p_\phi(y \mid x)\big]
\approx
\mathbb{E}_{q_\theta(y \mid x)}\left[\sum_{n=1}^{N} \log p_\phi\bigl(y_n \mid x, y_{\mathcal{N} \setminus n}\bigr)\right],
\)
where \(y_{\mathcal{N} \setminus n}\) denotes the labels of all nodes except node \(n\). It improves the GNN’s modeling of global structural signals using the LM representations.

\subsection{E-Step: LM Optimization}\label{subsec:e-step}

In the E-step, we fix the GNN and update the LM by maximizing the ELBO, thereby injecting global relational structure into the LM. Under the mean-field factorization, the optimal update for the factor of $q_\theta(y_n\mid x_n)$ for patient $n$ is given by the following theorem.

\begin{theorem}[Optimal mean-field update of $q_\theta$] \label{thm:q_opt}
Given the fixed $p_\phi(y_n \mid x_n)$, the local optimum of the mean-field variational distribution $q_\theta(y_n \mid x_n)$, denoted by $q_\theta^*(y_n \mid x_n)$, satisfies
\begin{align} \label{eq:q_optimum_main}
    \log q_\theta^*(y_n \mid x_n) &= \E_{q_\theta(y \mid x)} \big[ \log p_\phi(y_n\mid x_n, y_{\mathrm{NB}(n)}) \big] \nonumber 
    \\
    &\quad  + \const,
\end{align}
where the constant term is independent of $y_n$ and can be omitted during optimization.
\end{theorem}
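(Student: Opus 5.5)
This is the standard coordinate-ascent mean-field argument, as in Bishop's derivation, applied to the ELBO of Section~\ref{sec:VI} with the model term replaced by its pseudo-likelihood surrogate. The plan has three steps: isolate the terms that depend on the single factor $q_\theta(y_n\mid x_n)$, rewrite them as a negative KL divergence, and read off the minimizer.

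First, fix $\phi$ and all factors $q_\theta(y_m\mid x_m)$ with $m\neq n$. Following the M-step, approximate $\log p_\phi(y\mid x)$ by its pseudo-likelihood
\[
\sum_{m=1}^N \log p_\phi(y_m\mid x, y_{\mathcal{N}\setminus m}).
\]
We use the Markov property induced by the patient graph $A$ (implicit in the GNN): each conditional depends only on the node's neighbourhood. This gives
\[
p_\phi(y_m\mid x, y_{\mathcal{N}\setminus m}) = p_\phi(y_m\mid x_m, y_{\mathrm{NB}(m)}),
\]
where the conditioning on $x$ is absorbed into the GNN and written with $x_m$ as in the statement. Substituting the mean-field factorization $q_\theta(y\mid x)=\prod_m q_\theta(y_m\mid x_m)$, the entropy term splits as $\sum_m \E_{q_\theta(y_m\mid x_m)}[-\log q_\theta(y_m\mid x_m)]$. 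Only the $m=n$ entropy term involves $q_n := q_\theta(y_n\mid x_n)$.

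Second, collect every term containing $y_n$. Write the expected pseudo-log-likelihood as $\E_{q_n}\big[\E_{q_{-n}}[\sum_m \log p_\phi(y_m\mid x_m,y_{\mathrm{NB}(m)})]\big]$. Define
\[
\log \tilde p(y_n) := \E_{q_{-n}}\Big[\log p_\phi(y_n\mid x_n,y_{\mathrm{NB}(n)})\Big].
\]
The ELBO as a functional of $q_n$ then becomes
\[
\E_{q_n}[\log \tilde p(y_n)] - \E_{q_n}[\log q_n] + \const = -\mathrm{KL}(q_n\,\|\,\tilde p/Z) + \const .
\]
Here $Z$ is the normalizer, which does not depend on $q_n$. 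Maximizing this over distributions $q_n$ gives $q_n^*\propto \tilde p$, that is, $\log q_n^* = \log\tilde p + \const$. Since $q_n$ does not appear in its own neighbourhood conditional, $\E_{q_{-n}}$ can be written as $\E_{q_\theta(y\mid x)}$, which yields \eqref{eq:q_optimum_main}. Equivalently, one can take a functional derivative with a Lagrange multiplier for normalization and set it to zero. The word ``local optimum'' reflects that this is a coordinate-wise stationary point, since the other factors are held fixed.

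The main obstacle is the terms $m\neq n$ with $n\in\mathrm{NB}(m)$. Their conditionals $\log p_\phi(y_m\mid x_m,y_{\mathrm{NB}(m)})$ also depend on $y_n$, so strictly they contribute to $\log\tilde p(y_n)$ as well. I would handle this in one of two ways. The first is to state the approximation explicitly, as in GLEM-style derivations: treat each pseudo-likelihood term as updating only its own node's factor, so that cross terms are folded into the constant. The second is to observe that under the pseudo-likelihood, combined with the GNN's one-node-at-a-time conditional, the relevant term for $y_n$ is its own conditional. In either case the key step is to state clearly which approximation removes the neighbours' terms. The remaining steps are routine variational calculus.
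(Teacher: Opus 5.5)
There is a genuine gap, and it comes from your first move: you replace $\log p_\phi(y\mid x)$ in the E-step ELBO by the pseudo-likelihood $\sum_m \log p_\phi(y_m\mid x, y_{\mathcal{N}\setminus m})$. Under that surrogate, your claim that the ELBO as a functional of $q_n$ equals $\E_{q_n}[\log\tilde p(y_n)]-\E_{q_n}[\log q_n]$ plus a constant is false.

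For every $m$ with $n\in\mathrm{NB}(m)$, the term $\log p_\phi(y_m\mid x_m,y_{\mathrm{NB}(m)})$ depends on $y_n$ in two ways:
\begin{itemize}
\item through the edge potential $\psi_{mn}(y_m,y_n)$;
\item through the local normalizer $Z_m=\sum_{y_m'}\psi_m(y_m')\prod_{k\in\mathrm{NB}(m)}\psi_{mk}(y_m',y_k)$.
\end{itemize}
The honest coordinate optimum of your surrogate therefore carries an extra $\E[\log\psi_{mn}]$ and an extra $-\E[\log Z_m]$ for each neighbour. It is not the stated formula. Neither remedy you offer closes this. Dropping the cross terms turns the theorem into a heuristic rather than an identity. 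Your second option asserts what has to be shown, since the neighbours' conditionals really do depend on $y_n$.

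The missing idea, which is what the paper's proof uses, is to keep the exact joint in the E-step. The E-step is defined by $\mathrm{KL}(q_\theta(y\mid x)\,\|\,p_\phi(y\mid x))$; the pseudo-likelihood enters only in the M-step. Then apply the chain rule
\[
\log p_\phi(y\mid x)=\log p_\phi(y_n\mid x,y_{\mathcal{N}\setminus n})+\log p_\phi(y_{\mathcal{N}\setminus n}\mid x).
\]
The second term is a function of $y_{\mathcal{N}\setminus n}$ alone. Under the mean-field factorization its expectation does not involve $q_\theta(y_n\mid x_n)$, so it goes into the constant together with the other factors' entropies.

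This step is exact. Every interaction of $y_n$ with its neighbours already sits inside its own complete conditional, so the cross-term problem never arises. The Markov property then reduces $p_\phi(y_n\mid x,y_{\mathcal{N}\setminus n})$ to $p_\phi(y_n\mid x_n,y_{\mathrm{NB}(n)})$.

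From there your remaining steps go through unchanged: the negative-KL rewriting with normalizer $Z$, reading off $q_n^*\propto\tilde p$, and taking the expectation over $y_{\mathcal{N}\setminus n}$ with $y_n$ held free. Your version of the last step, with $\tilde p=\exp\E_{q_{-n}}[\log p_\phi]$, is in fact more careful than the paper's closing remark, which writes $q^*_\theta\propto\E[p_\phi]$.
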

The proof is provided in Appendix~\ref{app: optimum}. Based on Theorem~\ref{thm:q_opt}, computing the local optimum $q_\theta^*(y_n \mid x_n)$ requires evaluating an expectation under the current variational distribution $q_\theta(y \mid x)$. We estimate this expectation with a single-sample Monte Carlo estimator~\citep{vae}. As shown in Appendix~\ref{app:sampling_analy}, a single sample yields performance comparable to that of multiple samples, so we use a single sample in all experiments for efficiency.
This Monte Carlo approximation is implemented using a single forward pass of the LM to obtain the patient embedding \(o_n\) for each node, which is then used as the input node feature for the GNN, i.e., \(h_n^{(0)} = o_n\), yielding
\begin{equation} \label{eq:expect_q_p}
\begin{split}
    &\mathbb{E}_{q_\theta(y \mid x)} \big[ \log p_\phi(y_n \mid x_n, y_{\mathrm{NB}(n)}) \big] \\
    &\quad \simeq \log p_\phi(y_n \mid o_n, y_{\mathrm{NB}(n)}).
\end{split}
\end{equation}
Given \eqref{eq:q_optimum_main} and \eqref{eq:expect_q_p}, the local optimum $q_\theta^{*}(y_n \mid x_n)$ can thus be approximated by $p_\phi(y_n \mid o_n, y_{\mathrm{NB}(n)})$. This optimality condition indicates that, during the E-step, the LM is optimized through the fixed GNN, allowing its patient representations to incorporate both local semantics and neighborhood interactions.

The E-step seeks to maximize the ELBO, equivalently minimizing $\mathrm{KL} \big[q_\theta(y \mid x) || p_\phi(y \mid x)\big]$. However, directly optimizing this KL is typically intractable as it requires computing over all possible $q_\theta(y \mid x)$. To address this issue, we adopt the wake--sleep algorithm, which updates the inference model \(q_\theta\) by minimizing the reverse KL divergence~\citep{Hinton1995-wi},
\begin{equation} \label{eq:sleep_objective_full}
\begin{split}
    \theta^*
    &= \arg \min_\theta \mathrm{KL}\bigl(p_\phi(y \mid x)\,\|\,q_\theta(y \mid x)\bigr) \\
    &= \arg \max_\theta \mathbb{E}_{p_\phi(x, y)} \big[\log q_\theta(y \mid x)\big].
\end{split}
\end{equation}
We omit the constant term independent of \(\theta\). Combining the above results, we obtain the supervised E-step objective for updating \(q_\theta\) by fine-tuning the LM for label prediction,
\begin{equation}\label{eq:e-step-obj}
\begin{split}
    \mathcal{L}_q
    &= \mathbb{E}_{p_\phi(x,y)} \big[\log q_\theta(y \mid x)\big] \\
    &= \mathbb{E}_{p_\phi(x,y)} \left[\sum_{n=1}^{N} \log q_\theta\!\left(y_n \mid o_n\right)\right] \\
    &\approx \sum_{n=1}^{N} \log q_\theta\!\left(y_n \mid h_n^{(L)}(o; \bar{\phi})\right),
\end{split}
\end{equation}
where \(h_n^{(L)}(o; \bar{\phi})\) denotes the deterministic GNN representation computed from the LM embeddings \(o\) under the frozen GNN parameters \(\bar{\phi}\). Since expectation over $p_\phi(x,y)$ is intractable, we estimate it by a single-sample Monte Carlo sampling. In the E-step, the LM first generates embeddings $o_n$, which are passed through the frozen GNN to obtain $h_n^{(L)}(o,\bar{\phi})$. Because the GNN parameters are fixed, the loss is computed on $h_n^{(L)}(o,\bar{\phi})$ in place of $o_n$. Gradients backpropagate through the fixed GNN to $o_n$, updating only the LM parameters $\theta$.

\subsection{M-Step: GNN Optimization}\label{subsec:m-step}
Note that $\log p_\phi(x, y)=\log p_\phi(y \mid x) + \log p_\phi(x)$. 
In the M-step, we fix the LM and update the GNN to maximize the expected log-likelihood 
$   
    \E_{q_\theta(y \mid x)} \big[\log p_\phi(x, y) \big] 
     %
     = \E_{q_\theta(y \mid x)} \big[\log p_\phi(y \mid x) \big]  + \log p_\phi(x),
$
where the second term on the right hand side is independent of $\phi$ since the GNN only parameterizes $p_\phi(y\mid x)$, and is therefore omitted when optimizing  $\phi$. Directly maximizing $\log p_\phi(y\mid x)$ is often intractable because the partition function requires summing over the exponentially large label space to normalize the distribution \citep{Wainwright2008-jv}. Following the pseudo-likelihood approximation~\citep{Besag1975-zl}, we instead maximize a product of tractable local conditional likelihoods, where each term $y_n$ is conditioned on the remaining labels $y_{\mathcal{N} \setminus n}$ and the full observed evidence $x$:
$
    \E_{q_\theta(y \mid x)} \left[\sum_{n}^N \log p_\phi(y_n \mid x, y_{\mathcal{N} \setminus n}) \right] = \E_{q_\theta(y \mid x)} \left[\sum_{n}^N \log p_\phi(y_n \mid x_n, y_{\mathrm{NB}(n)}) \right],   
$
where the graphical Markov property implies that $y_n$ is conditionally independent of all non-neighbor labels given the evidence and its neighbors, i.e., $p_\phi(y_n \mid x, y_{\mathcal{N} \setminus n})= p_\phi(y_n \mid o_n, y_{\mathrm{NB}(n)})$. We estimate the expectation under $q_\theta(y \mid x)$ using a single Monte Carlo sample, and represent the node $x_n$ by its LM embedding $o_n$. The M-step updates the GNN by minimizing the supervised cross-entropy loss using the LM-encoded representations,
\begin{equation}\label{eq:m-step-obj}
    \mathcal{L}_p = \sum_{n}^N \log p_\phi(y_n \mid o_n, y_{\mathrm{NB}(n)}).
\end{equation}

\begin{algorithm}[t]
\caption{VEM Optimization}
\label{alg:optim}
\footnotesize
\KwIn{Patient sequences $x$, target labels $y$, LM $q_\theta$, GNN $p_\phi$, and patient graph $A$}
\KwOut{LM embeddings $\{o_n\}_{n=1}^N$, GNN node embeddings $\{h_n^{(L)}\}_{n=1}^N$, and predictions $\{\hat{y}_n\}_{n=1}^N$}
\While{not converged}{
  \tcp{E-step: update the LM while fixing the GNN}
  update LM $q_\theta$ by minimizing $\mathcal{L}_q$ in \eqref{eq:e-step-obj};
  \tcp{M-step: update the GNN while fixing the LM}
  update GNN $p_\phi$ by minimizing $\mathcal{L}_p$ in \eqref{eq:m-step-obj};
}
\end{algorithm}

\subsection{Optimization}
\label{subsec:optiz}

Algorithm~\ref{alg:optim} summarizes our VEM algorithm, which alternates between updating the LM $q_\theta$ and the GNN $p_\phi$ while fixing the other component. In the E-step, we freeze the GNN parameters $\phi$ and update the LM by minimizing the loss in \eqref{eq:e-step-obj} through the GNN output $h_n^{(L)}$. Gradients update only the LM parameters $\theta$. In the M-step, we freeze the LM parameters $\theta$, use the LM representations $o_n$ as fixed node features, and update the GNN by minimizing the loss in \eqref{eq:m-step-obj}. Gradients update only the GNN parameters $\phi$. 
During inference, the LM first encodes $x_n$ into $o_n$ and initializes the node feature as $h_n^{(0)} = o_n$, after which the GNN computes $h_n^{(L)}$ for prediction. At validation and test time, both modules are fixed and applied to the corresponding cohort graph for prediction.

\subsection{Reference-Patient Attribution}
\label{subsec:interpret_analysis}
PLM provides reference-patient explanations by retrieving \emph{reference patients} for a target patient $i$ from the patient graph $A$. For each target patient, we retrieve up to 10 reference patients. When more than 10 neighbors are available, we select the top 10 based on cosine similarity of their embeddings. For each reference patient $j$, we quantify its contribution to the prediction for patient $i$ via a gradient-based importance score, $\mathrm{Imp}(i,j)= \partial \, \mathrm{logit}_i / \partial \, A_{ij}$~\citep{saliency}. This is defined as the gradient of the target logit with respect to the edge weight $A_{ij}$, which reflects the contribution of $j$. We rank reference patients by $\mathrm{Imp}(i,j)$ and report the top peers as explanations for prediction.
To quantitatively assess the faithfulness of our reference-patient attribution, we perform a masking-based evaluation by removing selected reference edges and measuring the resulting prediction drop.

\section{Experiments}
\label{sec:exp}

\subsection{Experimental Setting}
\label{sec:setup}

\paragraph{Datasets and Preprocessing.}
We use two large-scale, de-identified EHR datasets, MIMIC-III and MIMIC-IV~\citep{johnson2016mimic,johnson2023mimiciv}. Each patient is represented as a temporally ordered visit sequence, with diagnoses, procedures, and medications treated as unordered sets within each visit. Data statistics and preprocessing are deferred to Appendix~\ref{app:dataset_stats} and \ref{app:preprocess}.

\paragraph{Tasks.} We evaluate three tasks: readmission within 15 days (AUPRC/AUROC), length-of-stay prediction over 10 classes (AUPRC/F1), and drug recommendation (AUPRC/F1/Jaccard). We use BCE for readmission and drug recommendation, and CE for LOS. Detailed task settings and metric definitions are provided in Appendix~\ref{app:tasks}.

\paragraph{Baselines.}
We compare against representative clinical prediction models, namely Deepr~\citep{nguyen2017deepr}, RETAIN~\citep{RETAIN}, GRAM~\citep{GRAM}, StageNet~\citep{gao2020stagenet}, AdaCare~\citep{Ma2020-gf}, and GRASP~\citep{zhang2021grasp}. We further compare with LM-graph models that incorporate external medical knowledge or graph, including G-BERT~\citep{shang2019gbert}, LEADER~\citep{liu2024leader}, GLEM~\citep{zhao2022glem}, GraphCare~\citep{graphcare2024}, KARE~\citep{KARE}, ColaCare~\citep{ColaCare}. For drug recommendation, we also include SafeDrug~\citep{yang2021safedrug}, MICRON~\citep{yang2021micron},
GAMENet~\citep{shang2019gamenet}, MoleRec~\citep{yang2023molerec}, UDC~\citep{UDC}, and KEHGCN~\citep{Zhang2026-li}. Details about baseline implementations are provided in Appendix~\ref{app:baselines}.

\begin{table*}[!t]
\centering
\caption{\textbf{Clinical prediction performance on MIMIC-III and IV.}
We report the mean (standard deviation) performance (\%) over 10 runs. The best results are highlighted for both datasets, and the second-best results are \protect\underline{underlined}. N/A indicates not applicable.}
\label{tab:results_mimic34}
\scriptsize
\setlength{\tabcolsep}{5pt}
\renewcommand{\arraystretch}{1.1}
\resizebox{\textwidth}{!}{%
\begin{tabularx}{\textwidth}{@{}>{\raggedright\arraybackslash}p{3.2cm}YYYYYYYY@{}}
\toprule
\multirow{3}{*}{\textbf{Model (\%)}} &
\multicolumn{4}{c}{\textbf{Task 1: Readmission}} &
\multicolumn{4}{c}{\textbf{Task 2: LOS}} \\
\cmidrule(lr){2-5}\cmidrule(lr){6-9}
& 
\multicolumn{2}{c}{\textbf{MIMIC-III}} & \multicolumn{2}{c}{\textbf{MIMIC-IV}} &
\multicolumn{2}{c}{\textbf{MIMIC-III}} & \multicolumn{2}{c}{\textbf{MIMIC-IV}} \\
\cmidrule(lr){2-3}\cmidrule(lr){4-5}\cmidrule(lr){6-7}\cmidrule(lr){8-9}
& \textbf{AUPRC} & \textbf{AUROC} & \textbf{AUPRC} & \textbf{AUROC}
& \textbf{AUPRC} & \textbf{F1} & \textbf{AUPRC} & \textbf{F1} \\
\midrule



\rowcolor{methodgreen}
PLM (BCMBERT-396M)
& \underline{\meanstd{51.7}{0.5}} & \underline{\meanstd{81.2}{0.4}} & \underline{\meanstd{48.6}{0.4}} & \textbf{\meanstd{80.6}{0.4}}
& \underline{\meanstd{83.6}{0.2}} & \underline{\meanstd{37.8}{0.4}} & \underline{\meanstd{82.8}{0.2}} & \underline{\meanstd{35.0}{0.3}} \\

\rowcolor{methodgreen}
PLM (Meerkat-8B)
& \textbf{\meanstd{52.2}{0.4}} & \textbf{\meanstd{81.4}{0.4}} & \textbf{\meanstd{49.7}{0.2}} & \underline{\meanstd{80.3}{0.3}}
& \textbf{\meanstd{84.5}{0.3}} & \textbf{\meanstd{38.6}{0.2}} & \textbf{\meanstd{85.3}{0.1}} & \textbf{\meanstd{35.6}{0.2}} \\
\midrule

Deepr     
& \meanstd{44.8}{0.9} & \meanstd{76.1}{0.4} & \meanstd{42.6}{0.2} & \meanstd{74.7}{0.3}
& \meanstd{77.9}{0.1} & \meanstd{35.0}{0.4} & \meanstd{79.5}{0.3} & \meanstd{32.3}{0.1} \\

RETAIN    
& \meanstd{40.6}{1.0} & \meanstd{71.0}{0.7} & \meanstd{38.5}{0.4} & \meanstd{69.5}{0.5}
& \meanstd{78.2}{0.1} & \meanstd{34.9}{0.4} & \meanstd{78.9}{0.3} & \meanstd{32.0}{0.2} \\

GRAM      
& \meanstd{43.8}{0.7} & \meanstd{74.1}{0.4} & \meanstd{41.4}{0.2} & \meanstd{73.0}{0.3}
& \meanstd{78.2}{0.1} & \meanstd{34.5}{0.2} & \meanstd{78.8}{0.2} & \meanstd{31.9}{0.3} \\

StageNet  
& \meanstd{42.5}{0.6} & \meanstd{73.8}{0.4} & \meanstd{40.2}{0.1} & \meanstd{72.6}{0.1}
& \meanstd{78.3}{0.2} & \meanstd{34.4}{0.4} & \meanstd{79.2}{0.3} & \meanstd{31.3}{0.3} \\

AdaCare   
& \meanstd{44.0}{0.6} & \meanstd{74.1}{0.3} & \meanstd{41.6}{0.1} & \meanstd{72.9}{0.1}
& N/A & N/A & N/A & N/A \\

GRASP     
& \meanstd{43.1}{0.4} & \meanstd{73.4}{0.6} & \meanstd{40.9}{0.3} & \meanstd{72.3}{0.2}
& N/A & N/A & N/A & N/A \\

G-BERT      
& \meanstd{49.6}{0.6}  & \meanstd{79.2}{0.8} & \meanstd{46.3}{0.4} & \meanstd{77.1}{0.7}
& \meanstd{79.7}{0.4} & \meanstd{35.2}{0.3} & \meanstd{78.6}{0.2} & \meanstd{32.0}{0.1} \\

LEADER      
& \meanstd{45.8}{0.5}  & \meanstd{76.3}{0.4} & \meanstd{44.7}{0.2} & \meanstd{75.3}{0.2}
& \meanstd{79.3}{0.3} & \meanstd{35.4}{0.2} & \meanstd{78.2}{0.2} & \meanstd{31.5}{0.2} \\

GLEM      
& \meanstd{45.7}{0.8} & \meanstd{75.1}{0.9} & \meanstd{42.6}{0.6} & \meanstd{73.9}{0.7}
& \meanstd{78.5}{0.3} & \meanstd{35.2}{0.4} & \meanstd{79.0}{0.2} & \meanstd{32.5}{0.3} \\

GraphCare 
& \meanstd{48.8}{0.5} & \meanstd{77.6}{0.6} & \meanstd{45.0}{0.4} & \meanstd{76.3}{0.5}
& \meanstd{81.4}{0.3} & \meanstd{37.5}{0.5} & \meanstd{81.7}{0.2} & \meanstd{34.2}{0.2} \\
KARE
& \meanstd{49.5}{0.3} & \meanstd{78.8}{0.3} & \meanstd{48.1}{0.2} & \meanstd{78.2}{0.3}
& \meanstd{80.7}{0.4} & \meanstd{32.9}{0.2} & \meanstd{81.3}{0.3} & \meanstd{33.5}{0.2} \\
ColaCare
& \meanstd{50.1}{0.4} & \meanstd{79.0}{0.3} & \meanstd{47.5}{0.4} & \meanstd{78.2}{0.3}
& \meanstd{82.3}{0.4} & \meanstd{35.5}{0.4} & \meanstd{82.5}{0.2} & \meanstd{33.7}{0.2} \\
\bottomrule
\end{tabularx}
}
\vspace{1pt}

\begin{tabularx}{\textwidth}{lYYYYYY}
\toprule
\multirow{3}{*}{\textbf{Model (\%)}} &
\multicolumn{6}{c}{\textbf{Task 3: Drug Recommendation}} \\
\cmidrule(lr){2-7}
& \multicolumn{3}{c}{\textbf{MIMIC-III}} &
  \multicolumn{3}{c}{\textbf{MIMIC-IV}} \\
\cmidrule(lr){2-4}\cmidrule(lr){5-7}
& \textbf{AUPRC} & \textbf{F1} & \textbf{Jaccard}
& \textbf{AUPRC} & \textbf{F1} & \textbf{Jaccard} \\
\midrule



\rowcolor{methodgreen}
PLM (BCMBERT-396M)
& \underline{\meanstd{79.6}{0.2}} & \underline{\meanstd{68.4}{0.3}} & \meanstd{51.5}{0.3}
& \underline{\meanstd{75.4}{0.1}} & \underline{\meanstd{64.8}{0.3}} & \meanstd{48.9}{0.3} \\

\rowcolor{methodgreen}
PLM (Meerkat-8B)
& \textbf{\meanstd{80.3}{0.2}} & \textbf{\meanstd{69.4}{0.3}} & \textbf{\meanstd{53.0}{0.3}}
& \textbf{\meanstd{77.1}{0.2}} & \textbf{\meanstd{65.7}{0.4}} & \textbf{\meanstd{50.4}{0.4}} \\
\midrule

Deepr
& \meanstd{72.3}{0.1} & \meanstd{60.3}{0.4} & \meanstd{44.7}{0.3}
& \meanstd{63.7}{0.1} & \meanstd{53.1}{0.4} & \meanstd{39.8}{0.4} \\

RETAIN
& \meanstd{75.1}{0.3} & \meanstd{65.2}{0.2} & \meanstd{49.4}{0.2}
& \meanstd{65.7}{0.6} & \meanstd{56.9}{0.4} & \meanstd{41.5}{0.4} \\

GRAM
& \meanstd{74.7}{0.1} & \meanstd{62.9}{0.3} & \meanstd{47.9}{0.3}
& \meanstd{65.3}{0.2} & \meanstd{53.1}{0.2} & \meanstd{40.3}{0.3} \\

StageNet
& \meanstd{73.4}{0.1} & \meanstd{61.4}{0.3} & \meanstd{45.8}{0.4}
& \meanstd{63.1}{0.1} & \meanstd{52.2}{0.3} & \meanstd{37.5}{0.4} \\

SafeDrug
& \meanstd{75.8}{0.2} & \meanstd{66.2}{0.2} & \meanstd{50.5}{0.2}
& \meanstd{67.1}{0.3} & \meanstd{58.2}{0.3} & \meanstd{43.0}{0.2} \\

MICRON
& \meanstd{76.5}{0.3} & \meanstd{67.4}{0.3} & \meanstd{51.1}{0.2}
& \meanstd{66.6}{0.4} & \meanstd{59.5}{0.3} & \meanstd{44.1}{0.3} \\

GAMENet
& \meanstd{76.1}{0.1} & \meanstd{66.0}{0.1} & \meanstd{50.2}{0.2}
& \meanstd{67.2}{0.3} & \meanstd{58.7}{0.3} & \meanstd{43.4}{0.3} \\

MoleRec
& \meanstd{72.8}{0.1} & \meanstd{61.1}{0.3} & \meanstd{46.1}{0.3}
& \meanstd{62.3}{0.1} & \meanstd{56.3}{0.4} & \meanstd{41.2}{0.3} \\

UDC
& \meanstd{78.5}{0.3} & \meanstd{67.5}{0.3} & \meanstd{51.7}{0.2}
& \meanstd{71.7}{0.2} & \meanstd{62.7}{0.3} & \meanstd{47.8}{0.3} \\

KEHGCN
& \meanstd{78.8}{0.2} & \meanstd{68.1}{0.3} & \underline{\meanstd{52.2}{0.2}}
& \meanstd{72.3}{0.4} & \meanstd{64.5}{0.3} & \underline{\meanstd{49.1}{0.3}} \\

G-BERT
& \meanstd{69.0}{0.2} & \meanstd{61.9}{0.2} & \meanstd{45.8}{0.3}
& \meanstd{62.4}{0.3} & \meanstd{51.7}{0.2} & \meanstd{37.3}{0.2} \\

LEADER
& \meanstd{78.0}{0.2} & \meanstd{67.4}{0.3} & \meanstd{51.8}{0.1}
& \meanstd{71.2}{0.2} & \meanstd{63.0}{0.2} & \meanstd{47.8}{0.2} \\

GLEM
& \meanstd{73.9}{0.1} & \meanstd{62.1}{0.4} & \meanstd{46.1}{0.4}
& \meanstd{66.0}{0.5} & \meanstd{54.5}{0.4} & \meanstd{41.4}{0.3} \\

GraphCare
& \meanstd{78.5}{0.2} & \meanstd{66.2}{0.3} & \meanstd{49.8}{0.4}
& \meanstd{70.7}{0.5} & \meanstd{60.4}{0.3} & \meanstd{45.7}{0.4} \\
\bottomrule
\end{tabularx}
\vspace{-1\baselineskip} 
\end{table*}

\subsection{Implementation Details}
\label{sec:imple}

To demonstrate that our method is backbone-agnostic and consistently improves performance across backbones, we instantiated the PLM framework with both encoder-only and decoder-only biomedical LMs.
For encoder-only backbone, we used \textit{BioClinical ModernBERT-396M (BCMBERT)} \citep{BCMBERT} and fine-tuned the last six Transformer layers with a task-specific prediction head. For decoder-only backbone, we used a Llama-based LLM \textit{Meerkat-8B}, obtained sequence embeddings via LLM2Vec~\citep{llm2vec}, and fine-tuned the model using LoRA adapters on attention projection layers~\citep{hu2022lora}, while freezing the remaining parameters. We used a standard three-layer GCN on the patient graph. Details about model architecture and hyperparameters are provided in Appendix~\ref{app:impl}.

\section{Results}
\label{sec:results}

\begin{figure*}[t]
\centering
\includegraphics[width=0.9\linewidth]{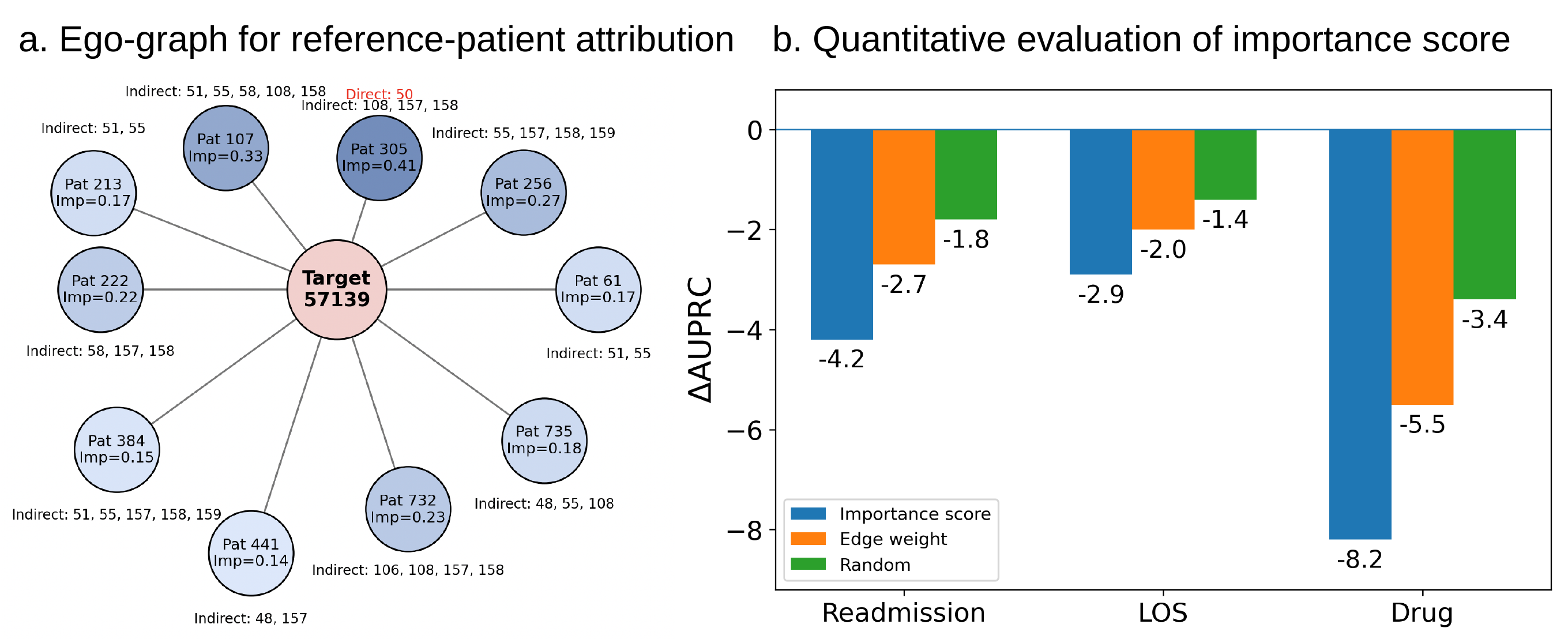}
\captionsetup{skip=3pt}
\caption{\textbf{Interpretability of PLM.}
\textbf{(a) Ego-graph for reference-patient attribution given a diabetes patient.} Nodes are positioned closer and colored darker for higher importance scores. \textcolor{red}{Direct} diabetes evidence is highlighted in red, indirectly related endocrine conditions and diabetes complications are also annotated.
\textbf{(b) Quantitative evaluation of importance scores.}
We assess explanation faithfulness using a masking-based evaluation, where more negative $\Delta$AUPRC indicates that the masked reference patients were more influential.}
\label{fig:interpret_ana}
\vspace{-1\baselineskip} 
\end{figure*}

\begin{figure*}[t]
  \centering
  \includegraphics[width=0.9\linewidth]{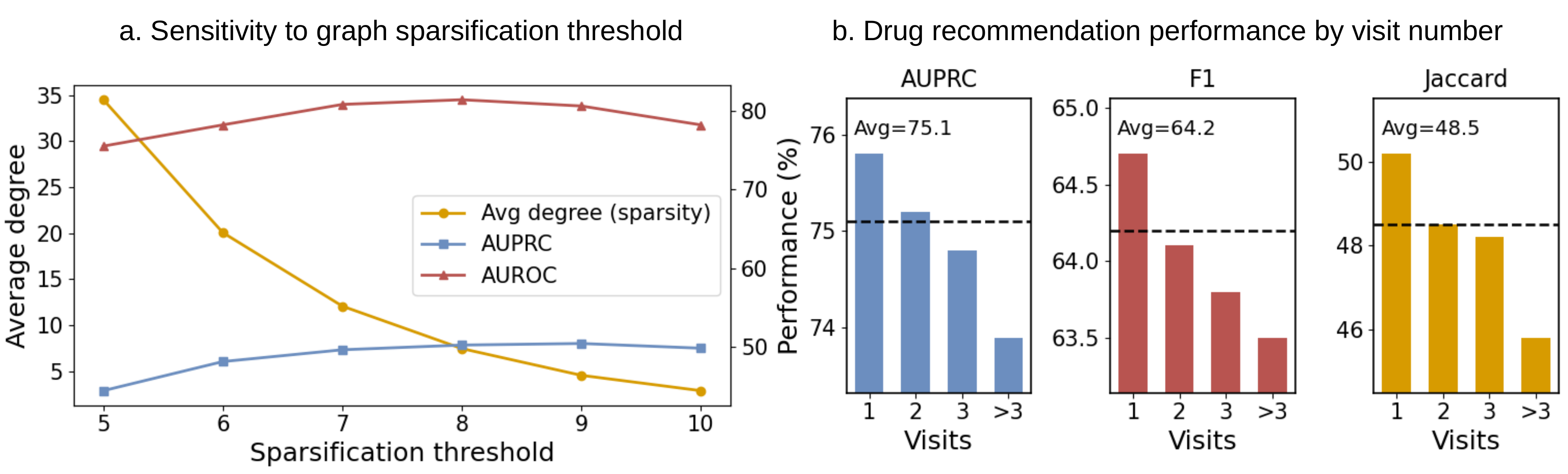}
    \caption{\textbf{Sensitivity analyses.}
    \textbf{(a) Sensitivity to the graph sparsification threshold $\tau$.}  Increasing $\tau$ produces a sparser graph with lower average degree. The readmission prediction performance peaks around $\tau=8$.
    \textbf{(b) Drug recommendation performance stratified by visit number on MIMIC-IV.} Performance remains stable for one to three visits and declines noticeably for more than three visits, indicating PLM does not rely on long visit histories.}
    \label{fig:sensitivity_analy}
    \vspace{-1\baselineskip} 
\end{figure*}
\begin{table*}[t]
\centering
\caption{Ablation studies on MIMIC-IV comparing PLM with four training strategies under the BCMBERT and Meerkat backbones. VEM consistently yields the best performance across tasks.}
\label{tab:ablation}
\scriptsize
\resizebox{\textwidth}{!}{%
\begin{tabular}{llccccccc}
\toprule
\textbf{Backbone} & \textbf{Training} &
\multicolumn{2}{c}{\textbf{Readmission}} &
\multicolumn{2}{c}{\textbf{LOS}} &
\multicolumn{3}{c}{\textbf{Drug Recommendation}} \\
\cmidrule(lr){3-4}\cmidrule(lr){5-6}\cmidrule(lr){7-9}
& & \textbf{AUPRC} & \textbf{AUROC}
& \textbf{AUPRC} & \textbf{F1}
& \textbf{AUPRC} & \textbf{F1} & \textbf{Jaccard} \\
\midrule
\rowcolor{methodgreen}
BCMBERT-396M & VEM
& \textbf{48.6}  & \textbf{80.6} & \textbf{82.8}  & \textbf{35.0}  & \textbf{75.4}  & \textbf{64.8}  & \textbf{48.9}  \\
        & Alternating & 45.6 & 76.9 & 79.8 & 33.4 & 69.2 & 58.2 &  44.8 \\
        & E2E    
& 44.2  & 76.0  & 78.4  & 31.6  & 71.5  & 62.5  & 46.4 \\
        & 2-stage 
& 45.5 & 77.4 & 78.7 & 32.0 & 72.4 & 62.8 & 46.3 \\
        & LM-only
& 43.6 & 74.5 & 77.0  & 30.7  & 70.2 & 61.4 & 45.5 \\
\midrule
\rowcolor{methodgreen}
Meerkat-8B & VEM 
& \textbf{49.7}  & \textbf{80.3}  & \textbf{85.3}  & \textbf{35.6}  & \textbf{77.1}  & \textbf{65.7}  & \textbf{50.4} \\
        & Alternating & 46.4 & 77.6 & 81.2 & 33.1 & 72.4 & 60.9 &  47.1 \\
        & E2E
& 44.4  & 77.2  & 78.0  & 30.8  & 70.4  & 60.2  & 45.4  \\
       & 2-stage 
& 46.2 & 77.3 & 81.8 & 32.9 & 73.2 & 63.6 & 47.9 \\
       & LM-only 
& 43.9 & 75.4 & 79.1  & 30.9  & 71.3  & 60.7  & 45.8 \\
\bottomrule
\end{tabular}%
}
\end{table*}

\subsection{Clinical Predictions on MIMIC Datasets}
\label{subsec:clinical}

As shown in Table~\ref{tab:results_mimic34}, PLM achieves the strongest overall performance across datasets and tasks. Both PLM variants consistently outperform prior clinical prediction models across readmission, LOS, and drug recommendation, with the Meerkat variant performing best in most settings. Meanwhile, PLM with BCMBERT remains highly competitive and still surpasses the strongest prior baselines on MIMIC-IV. The gains are particularly consistent for drug recommendation, suggesting that combining patient-level semantics with cohort structure benefits multi-label clinical prediction.

Table~\ref{tab:efficiency_cost} reports runtime and peak GPU memory on MIMIC-IV with the Meerkat-8B backbone, showing that VEM achieves substantially better performance than all baselines. VEM outperforms alternating optimization while requiring close training time and GPU memory. Compared with LM-only, VEM adds only 4.1\% runtime and 4.7\% memory overhead, while remaining more efficient than E2E and two-stage training. Further details are provided in Section~\ref{subsec:efficiency_analy}.

\subsection{Generalization Across LM Backbones}
\label{subsec:backbone_generalization}

To assess the generalization of VEM across backbones, we further evaluated PLM with four backbones on MIMIC-IV spanning both encoder-only models (BCMBERT and BioBERT) and decoder-only models (Meerkat and BioMistral). VEM consistently improves over each corresponding LM-only baseline across all three tasks, demonstrating that the proposed framework generalizes across different LM backbones rather than depending on a specific backbone. Detailed experimental results are provided in Appendix~\ref{app:backbone_generalization}.

\subsection{Interpretability Analysis}
\label{subsec:interpretability}

We assessed interpretability of PLM on insulin prediction in drug recommendation. Fig.~\ref{fig:interpret_ana}.a shows an ego graph for a target diabetes patient 57139 with ten retrieved reference cases. PLM assigns each reference patient an importance score (Imp); higher scores denote closer distance to the target and darker color. We highlighted in red the CCS codes directly associated with insulin (e.g., \texttt{50: Diabetes mellitus with complications}) and annotated indirectly related CCS codes such as endocrine conditions and diabetes complications. Patient 305 with a direct diabetes CCS code has the highest Imp 0.41. Patients 107 and 384 also have high scores (0.33 and 0.29) given their multiple diabetes-related comorbidities. Overall, the retrieved references support insulin prediction by highlighting both direct diabetes cases and cases with diabetes-related 
complications.

We further compared against a baseline that ranks reference patients using only weighted edges $A_{ij}$ (Fig.~\ref{fig:imp_wpg}). This edge-weight baseline can miss clinically salient references. Despite being the most influential reference with a direct diabetes code, patient 305 has an edge weight of 9, only slightly above the neighborhood threshold of 8. Therefore, edge weights mainly reflect shared-code overlap, whereas PLM leverages both textual semantics and graph structure to yield relevant cases as explanations. Additional details and interpretability analyses are provided in Appendix~\ref{app:interpret}.

To quantitatively evaluate the faithfulness of our reference-patient attributions, we conducted a masking-based evaluation on the readmission, LOS, and drug recommendation tasks. For each test patient, we removed the three retrieved reference edges selected by a given masking strategy, ran inference on the updated test graph, and measured the resulting change in AUPRC. We considered three selection strategies: top-3 by our importance scores, top-3 by edge weights, and random selection. Fig.~\ref{fig:interpret_ana}.b shows that masking the top-3 references ranked by our importance scores consistently yields the largest performance drop across all three tasks. This suggests that the computed importance scores more faithfully identify the reference patients that drive the model's predictions.

\subsection{Sensitivity Analysis}
\label{subsec:sensitivity}

We study readmission sensitivity to the graph sparsification threshold $\tau$, where an edge is retained only if $A_{ij} \geq \tau$. As shown in Figure~\ref{fig:sensitivity_analy}.a, increasing $\tau$ prunes weaker links, lowers average degree, and improves performance up to $\tau{=}8$. Beyond that, the graph becomes overly sparse and performance degrades. We therefore use $\tau{=}8$, which balances efficiency and predictive performance.

We also stratify MIMIC-IV drug recommendation performance by visit count. Figure~\ref{fig:sensitivity_analy}.b shows stable performance for one to three visits and a clearer decline beyond three, suggesting that PLM does not rely on long histories for strong performance; this is consistent with prior work on increasing patient complexity \citep{Meng2024-rt}.


\subsection{Ablation Studies}
\label{subsec:ablation}

We ablated PLM under both BCMBERT and Meerkat backbones by comparing  it with LM-only and LM-GNN variants trained using alternating, E2E or in two-stage methods. As shown in Table~\ref{tab:ablation}, PLM with VEM achieves the best performance across all tasks under both backbones. These results indicate that the improvements stem from VEM's probabilistic training scheme, rather than simply adding a GNN, switching LM backbones, or using various optimization methods.

\section{Conclusion}

We propose PLM, a unified LM-GNN framework for clinical prediction that integrates patient textual semantics with graph relational structure through a VEM algorithm. Across MIMIC-III and MIMIC-IV, PLM consistently improves readmission prediction, LOS forecasting, and drug recommendation, and these gains generalize across both encoder-only and decoder-only LM backbones. Compared with other LM-GNN training strategies, VEM achieves these improvements with only modest computational overhead. Qualitative case studies show that PLM identifies influential similar patients as reference-patient explanations, while quantitative edge masking confirms that the highest-ranked references have the strongest effect on predictions.

\section*{Limitations}

PLM currently constructs patient graphs using diagnosis-code overlap, which may not capture richer clinical relationships available from multimodal EHR data or external knowledge. In addition, our reference-patient attributions are evaluated quantitatively for faithfulness, while further clinician assessment is needed before real-world deployment. Future work will explore richer graph construction and evaluate these attributions with clinical stakeholders.

\nocite{langley00}
\bibliography{example_paper}

\newpage
\appendix
\section{Detailed Mathematical Derivations}
\subsection{Derivation of Variational EM Algorithm}

We define the ELBO objective of the VI as 
\[
    \mathcal{L}(\theta) := \E_{q_\theta(y \mid x)} \Big[\log p_\phi (x, y) - \log q_\theta(y \mid x ) \Big].
\]
For the variational distribution $q_\theta(y \mid x)$, we adopt the mean-field assumption by factorizing latent variable $z$ independently:
\[
    q_\theta (y  \mid x) = \prod_{n=1}^N q_\theta(y_n \mid x_n) .
\]
Formally, maximizing the ELBO with respect to the variational distribution $q_\theta(y \mid x)$ is equivalent to minimizing the KL divergence between the posterior distribution and the variational distribution.
The KL divergence is defined as
\[
    \begin{aligned}
    &\mathrm{KL} \Big[ q_\theta(y \mid x) || p_\phi(y \mid x) \Big] \\
    &\quad = \E_{q_\theta(y \mid x)} \Big[\log q_\theta(y \mid x) -\log p_\phi(y \mid x) \Big] \\
    &\quad = \sum_y q_\theta(y \mid x) \log \frac{q_\theta(y  \mid x)}{p_\phi(y \mid x)}.
    \end{aligned}
\]
Therefore, in the E-step, we update the variational distribution $q_\theta(y \mid x)$ (i.e., fine-tuning the LM) by minimizing $\mathrm{KL} \Big[q_\theta(y \mid x) || p_\phi(y \mid x)\Big]$. However, this objective is typically intractable because it requires to compute the expectations under $q_\theta(\cdot)$ over all possible values. To address this, we use the wake-sleep algorithm~\citep{Hinton1995-wi} to minimize the reverse KL
divergence, where the sleep phase minimizes the KL divergence the wrong way round:
\[
    \begin{aligned}
    \theta^* &:=\arg \min_\theta \mathrm{KL} \Big[p_\phi(y \mid x) \| q_\theta(y \mid x)\Big] \\
    &= \arg\min_\theta \E_{p_\phi(y\mid x)}\Big[\log p_\phi(y\mid x) \\
    &\qquad\qquad\qquad -\log q_\theta(y\mid x)\Big] \\
    &= \arg\max_\theta \E_{p_\phi(y\mid x)}\Big[\log q_\theta(y\mid x)\Big] + \const \\
    &= \arg \max_\theta \E_{p_\phi(x, y)} \Big[\log q_\theta(y \mid x)\Big] ,
    \end{aligned}
\]
where $\const=\E_{p_\phi(y\mid x)} \Big[\log p_\phi(y\mid x)\Big]$ does not depend on $\theta$ and is therefore omitted during optimization.
\begin{align} 
    \mathcal{L}_q &= \E_{p_\phi(x, y)} \Big[\log q_\theta(y \mid x)\Big] \nonumber \\
    & = \E_{p_\phi(x, y)} \Big[\sum_n^N \log q_\theta(y_n \mid x_n)\Big] \nonumber \\
    & = \E_{p_\phi(x, y)} \Big[\sum_n^N \log q_\theta(y_n \mid o_n)\Big] \nonumber\\
    & \approx \sum_n^N \log q_\theta\big(y_n \mid h_n^{(L)}(o, \bar{\phi})\big),
\end{align}
where $h_n^{(L)}(o;\bar{\phi})$ denotes the deterministic GNN representation obtained from the LM embeddings $o$ under frozen GNN parameters $\bar{\phi}$. 
Since the expectation over $p_\phi(x,y)$ is intractable, we estimate it with a single-sample Monte-Carlo sampling. In the E-step, LM first generates embeddings $o_n$, which are passed through the frozen GNN to obtain $h_n^{(L)}$. Because the GNN parameters are fixed, the loss is computed on $h_n^{(L)}(o,\bar{\phi})$ in place of $o_n$. Gradients then backpropagate through the fixed GNN to $o_n$, updating only the LM parameters $\theta$. Thus, optimizing $\mathcal{L}_q$ amounts to maximizing the label log-likelihood, which we approximate by replacing the LM embedding $o_n$ with the fixed GNN representation $h_n^{(L)}(o,\bar{\phi})$.

In the M-step, we then update the GNN model $p_\phi(y \mid x)$ towards maximizing the following pseudo-likelihood:
\begin{align}
    &\E_{q_\theta(y \mid x)} \Big[\log p(x, y) \Big] \nonumber \\
    &\quad = E_{q_\theta(y \mid x)} \Big[\log p_{\phi}(y \mid x) + \log p(x) \Big] \nonumber \\
    &\quad = \E_{q_\theta(y \mid x)}\Big[\log p_{\phi}(y \mid x) \Big] + \const  \nonumber \\
    &\quad \approx \E_{q_\theta(y \mid x)}\left[\sum_n^N  \log p_\phi(y_n \mid x, y_{\mathcal{N} \setminus n}) \right] \nonumber \\
    &\qquad\quad (\mathrm{pseudo likelihood}) \nonumber \\
    &\quad = \E_{q_\theta(y \mid x)}\left[\sum_n^N  \log p_\phi(y_n \mid x_n, y_{\mathcal{N} \setminus n}) \right]  \nonumber \\
    &\quad = \sum_{n=1}^N \log p_\phi (y_n \mid o_n,  y_{\mathcal{N} \setminus n}) \nonumber\\
    &\quad = \sum_{n=1}^N \log p_\phi(y_n \mid o_n, y_{\mathrm{NB}(n)}).
\end{align}
The details of equation estimated as pseudo likelihood in that form is because that for $p_\phi(y \mid x)$, there exist node and edge potentials $\psi_n(\cdot \mid x), \psi_{n m}(\cdot, \cdot \mid x)$ such that
\[
    \begin{aligned}
    p_\phi(y \mid x) = \frac{1}{Z_\phi(x)}
    &\prod_{n=1}^N \psi_n\left(y_n \mid x_d \right) \\
    &\times \prod_{(n, m) \in E} \psi_{n m}\left(y_n, y_m \mid x_n \right) .
    \end{aligned}
\]

We also define that for all the nodes $x=\left(x_1, \ldots, x_N\right)$ with labels $y=\left(y_1, \ldots, y_N\right)$ on a graph with neighbor sets $\mathrm{NB}(n)$, then by the graphical Markov property we have:
\[
y_n \perp y_{\mathcal{N} \backslash(\{n\} \cup \mathrm{NB}(n))} \mid\left(x, y_{\mathrm{NB}(n)}\right), n = 1, \ldots, N .
\]
Given the features $x$ and the labels of $n$-th node's  neighbors $y_{\mathrm{NB}(n)}$, the label $y_n$ is statistically independent of all other non-neighbor labels.

Thus, we have:
\begin{align}
&p_\phi(y \mid x) \approx \prod_{n=1}^N p_\phi\left(y_n \mid x_n, y_{\mathcal{N} \setminus n} \right) \nonumber \\
&\quad =\prod_{n=1}^N \frac{p_\phi\left(y_n, y_{\mathcal{N} \setminus n} \mid x_n \right)}{\sum_{n^\prime}{ } p_\phi\left(y_{n^{\prime}}, y_{\mathcal{N} \setminus n} \mid x_n \right)} \nonumber \\
&\quad = \resizebox{0.86\columnwidth}{!}{$\displaystyle \prod_{n=1}^N \frac{\psi_n\left(y_n \mid x_n \right) \prod\limits_{m \in \mathrm{NB}(n)} \psi_{n m}\left(y_n, y_m \mid x_n \right)}{\sum\limits_{n^{\prime}} \psi_n\left(y_{n^{\prime}}\mid x_{n^{\prime}} \right) \prod\limits_{m \in \mathrm{NB}(n^{\prime})} \psi_{n m}\left(y_{n^{\prime}}, y_m \mid x_{n^{\prime}} \right)}$} \nonumber \\
&\quad =\prod_{n=1}^N p_{\phi}\left(y_n \mid x_n, y_{\mathrm{NB}(n)}\right) .
\end{align}
The first line follows from the definition of the pseudo-likelihood. The transition from the second line to the third line applies the definition of the full conditional in fractional form, where we cancel the global normalizing constant $Z_\phi(x)$. 
The transition from the third line to the fourth line uses the Graphical Markov property.

For the expectation $\E_{q_\theta(y \mid x)}[\cdot]$, we approximate it using Monte-Carlo Sampling with a single sample, replacing the discrete input $x_n$  with the LM-encoded embedding $o_n$. As a result,  $p_\phi (y_n \mid o_n, y_{\mathcal{N} \setminus n})$ represents that GNN makes prediction given the LM-encoded  embedding $o_n$ and all surrounding nodes $y_{\mathcal{N} \setminus n}$. As a result, M-step will train a GNN using the following supervised objective on the labelled nodes:
\begin{align}
    O(p) = \sum_{n=1}^V \log p(y_n \mid o_n, y_{\mathcal{N} \setminus n}) .
\end{align}

\subsection{Optimal Mean-Field Variational Update}
\label{app: optimum}

\textbf{Theorem 4.1} Given the fixed $p_\phi(y_n \mid x_n)$, the local optimum upate $q_\theta^*(y_n \mid x_n)$ satisfies:
\begin{align}
    &\log q_\theta^*(y_n \mid x_n) \nonumber \\
    &\quad = \E_{q_\theta(y \mid x)} \Big[ \log p_\phi(y_n\mid x_n, y_{\mathrm{NB}(n)}) \Big].
\end{align}



\textbf{Proof} The goal of E-step is to optimize $q_\theta(y \mid x)$ by minimizing $\mathrm{KL} \big[q_\theta(y \mid x) || p_\phi(y \mid x)\big]$. Therefore, the objective function for $q_\theta(y \mid x)$ could be formulated as follows:
\begin{align}
    &\mathcal{L}(q_\theta(y_n \mid x_n)) \nonumber \\
    &\quad = -\mathrm{KL} \Big[q_\theta(y \mid x) \Vert p_\phi(y \mid x)\Big] \nonumber \\
    &\quad = \E_{q_\theta(y \mid x)} \Big[ \log p_\phi(y\mid x) - \log q_\theta(y \mid x) \Big] \nonumber\\
    &\quad = \E_{q_\theta(y \mid x)} \Big[ \log p_\phi(y \mid x) \nonumber\\
    &\qquad\qquad - \sum_{n'=1}^N \log  q_\theta(y_{n'} \mid x_{n'}) \Big] \nonumber\\
    &\quad = \E_{q_\theta(y \mid x)} \Big[\log p_\phi(y \mid x) \nonumber\\
    &\qquad\qquad -\sum_{n'\neq n}^N \log q_\theta(y_{n'} \mid x_{n'}) \nonumber\\
    &\qquad\qquad - \log q_\theta(y_n \mid x_n) \Big] \nonumber\\
    &\quad = \E_{q_\theta(y \mid x)} \big[ \log p_\phi(y \mid x) \nonumber\\
    &\qquad\qquad - \log q_\theta(y_n \mid x_n) \Big]  + \const \nonumber\\
    &\quad = \E_{q_\theta(y \mid x)} \Big[\log p_\phi(y_n \mid x, y_{\mathcal{N} \setminus n}) \nonumber\\
    &\qquad\qquad + \log p_\phi(y_{\mathcal{N} \setminus n} \mid x) \nonumber\\
    &\qquad\qquad - \log q_\theta(y_n \mid x_n) \Big] +\const \nonumber\\
    &\quad = \E_{q_\theta(y \mid x)} \Big[\log p_\phi(y_n \mid x, y_{\mathcal{N} \setminus n}) \nonumber \\
    &\qquad\qquad - \log q_\theta(y_n \mid x_n) \Big] + \const \nonumber \\
    &\quad = \E_{q_\theta(y \mid x)} \Big[\log p_\phi(y_n \mid x_n, y_{\mathrm{NB}(n)}) \nonumber \\
    &\qquad\qquad - \log q_\theta(y_n \mid x_n) \Big] + \const  \nonumber \\
    &\quad =-\mathrm{KL} \big[ q_\theta(y_n \mid x_n) \Vert p_\phi(y_n \mid x_n, y_{\mathrm{NB}(n)} )\big] \nonumber \\
    &\qquad\qquad + \const.
\end{align}
In GNNs, we leverage the Markov property that node $n$ is conditionally independent of all non-neighbors given its neighbors, reducing $y_{\mathcal{N} \setminus n}$ to $y_{\mathrm{NB}(n)}$. 

As a result, the objective $O(q_\theta(y_n, \mid x_n))$ becomes the is the KL divergence between $q_\theta(y_n \mid x_n)$ and $p_\phi(y_n \mid x_n, y_{\mathrm{NB}(n)})$. The local optimum satisfies $q_\theta^*(y_n \mid x_n) \propto \E_{q_\theta(y \mid x)} \Big[p_\phi(y_n \mid x_n, y_{\mathrm{NB}(n)}) \Big]$. With Monte Carlo sampling, this local optimum becomes $q_\theta^*(y_n \mid x_n) \propto p_\phi(y_n \mid o_n, y_{\mathrm{NB}(n)})$, where $o_n$ is the LM‑encoded embedding for node $n$. This indicates that optimal LM learning corresponds to using the GNN’s predictions for $y_n$ based on LM‑encoded embeddings $o_n$ and neighbor labels $y_{\mathrm{NB}(n)}$, integrating local textual information and global patient relationships into clinical prediction.

\subsection{Comparison with Classical EM and GLEM Under Full Supervision}
\label{app:glem_full_supervision}

The classical EM algorithm maximizes an observed-data likelihood by introducing an associated complete-data model.
In the classical EM formulation \citep{MLE_incomplete}, the observed data are fixed, whereas the complete data include components that are not directly observed. The E-step computes the conditional expectation of the complete-data log-likelihood given the observed data and the current parameter estimate, and the M-step maximizes this expected complete-data log-likelihood. Thus, classical EM is tied to an observed-data likelihood and a particular complete-data augmentation.

Our proposed VEM algorithm is related to this EM algorithm only at the level of its
alternating inference-prediction structure. It is not an incomplete-label
EM algorithm in the classical sense: all training labels are observed, and
we do not impute missing labels or recover unobserved components of the
training data. Instead, PLM introduces an auxiliary predictive distribution
to couple an LM-based semantic encoder with a GNN-based structured
predictor. This distinction is important because the variational object in
PLM is not a posterior distribution over missing ground-truth labels.

\paragraph{Degeneracy of GLEM under full supervision.}
GLEM is formulated for semi-supervised node
classification on text-attributed graphs. Let \(V=L\cup U\) denote the set
of all nodes, where \(L\) is the labelled set and \(U\) is the unlabelled
set. The observed quantities are the node texts or semantic features
\(s_V\), the graph structure \(A\), and the labels \(y_L\) of the labelled
nodes. The labels \(y_U\) of the unlabelled nodes are treated as latent
variables. A generic GLEM-style semi-supervised variational objective can
be written as
\[
\begin{aligned}
&\mathcal{L}_{\mathrm{GLEM}}(\theta,\phi) \\
&\quad =
\mathbb{E}_{q_\theta(y_U\mid s_U)}
\Big[
\log p_\phi(y_L,y_U\mid s_V,A) \\
&\qquad\qquad\quad
-
\log q_\theta(y_U\mid s_U)
\Big],
\end{aligned}
\]
where \(q_\theta(y_U\mid s_U)\) is the LM-induced variational distribution
over the missing labels and \(p_\phi(y_L,y_U\mid s_V,A)\) is the
GNN-induced graph-structured predictive model.

\begin{proposition}[Collapse of GLEM's missing-label variational objective]
\label{prop:glem_degeneracy}
Suppose all nodes are labelled, so that \(U=\emptyset\) and \(L=V\). Then
the GLEM-style variational objective reduces to
\[
\mathcal{L}_{\mathrm{GLEM}}(\theta,\phi)
=
\log p_\phi(y_V\mid s_V,A).
\]
Consequently, the variational distribution over missing labels (i.e., latent variables) disappears, and the objective contains no nontrivial missing-label inference problem.
\end{proposition}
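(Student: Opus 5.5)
The plan is to substitute $U=\emptyset$ directly into the definition of $\mathcal{L}_{\mathrm{GLEM}}$ and then interpret the degenerate objects carefully. When $U=\emptyset$, the latent vector $y_U$ is the empty tuple. Its state space is therefore the singleton $\{\emptyset\}$. Any distribution $q_\theta(y_U\mid s_U)$ on a singleton is the point mass with $q_\theta(y_U\mid s_U)=1$. I will state this convention explicitly: an empty product of factors equals $1$, consistent with the mean-field form $q_\theta(y_U\mid s_U)=\prod_{n\in U}q_\theta(y_n\mid s_n)$. This step fixes the meaning of the statement.

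With $q_\theta$ identified as the point mass, the remaining steps are routine. First, the expectation $\mathbb{E}_{q_\theta(y_U\mid s_U)}[f(y_U)]$ is a sum over the single state, so it equals $f(\emptyset)$. Second, the entropy term $-\log q_\theta(y_U\mid s_U)=-\log 1=0$ vanishes. Third, with $L=V$, we have $p_\phi(y_L,y_U\mid s_V,A)=p_\phi(y_V\mid s_V,A)$, since concatenating $y_L$ with an empty vector leaves it unchanged. Combining these gives $\mathcal{L}_{\mathrm{GLEM}}(\theta,\phi)=\log p_\phi(y_V\mid s_V,A)$, which is the claimed identity.

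For the ``consequently'' part, I will observe that the right-hand side does not depend on $\theta$, so $\nabla_\theta\mathcal{L}_{\mathrm{GLEM}}=0$. The E-step of GLEM, which maximizes over $q_\theta$, is then vacuous: there is no posterior over missing labels to infer, and the objective reduces to plain supervised maximum likelihood for the GNN. I will also note that, equivalently, the bound is tight: $\log p_\phi(y_L\mid s_V,A)-\mathcal{L}_{\mathrm{GLEM}}=\mathrm{KL}\bigl(q_\theta(y_U)\,\|\,p_\phi(y_U\mid y_L,s_V,A)\bigr)=0$, because both distributions live on a single point. This contrasts with PLM's objective, where the variational factor is attached to the observed $y$ themselves.

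The only real obstacle is making the degenerate-case conventions rigorous rather than hand-waved: the empty product equals $1$, the distribution over the empty configuration is a Dirac mass, and $\log p_\phi(y_L,y_{\emptyset})=\log p_\phi(y_L)$. To handle it, I will present the objective as an explicit sum $\sum_{y_U}q_\theta(y_U)\bigl[\cdots\bigr]$ over the finite label space $\mathcal{Y}^{|U|}$, which for $|U|=0$ has exactly one element. The identity then follows immediately.
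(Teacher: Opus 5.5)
Your proposal is correct and follows essentially the same route as the paper: treat $q_\theta(y_U\mid s_U)$ as the point mass on the empty configuration, so the entropy term is $-\log 1=0$ and the expectation collapses to $\log p_\phi(y_L\mid s_V,A)=\log p_\phi(y_V\mid s_V,A)$. Your additional remarks are valid and make the ``consequently'' clause more precise than the paper does, namely that the reduced objective is independent of $\theta$ and that the KL gap between the bound and $\log p_\phi(y_L\mid s_V,A)$ is zero.
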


\begin{proof}
If all nodes are labelled, then \(U=\emptyset\) and hence \(y_U=\emptyset\).
The variational distribution over the empty label vector is the degenerate
distribution
\[
q_\theta(y_U\mid s_U)
=
q_\theta(\emptyset\mid s_\emptyset)
=
1.
\]
Therefore,
\[
-
\mathbb{E}_{q_\theta(y_U\mid s_U)}
\log q_\theta(y_U\mid s_U)
=
-\log 1
=
0.
\]
Moreover, the expectation with respect to \(q_\theta(y_U\mid s_U)\) is an
expectation over a point mass, so
\[
\begin{aligned}
&\mathbb{E}_{q_\theta(y_U\mid s_U)}
\left[
\log p_\phi(y_L,y_U\mid s_V,A)
\right] \\
&\qquad =
\log p_\phi(y_L\mid s_V,A).
\end{aligned}
\]
Since \(L=V\), this becomes
\[
\mathcal{L}_{\mathrm{GLEM}}(\theta,\phi)
=
\log p_\phi(y_V\mid s_V,A).
\]
Thus, the missing-label latent-variable component of the GLEM objective
vanishes under full supervision.
\end{proof}

Proposition~\ref{prop:glem_degeneracy} shows that the variational
interpretation of GLEM relies on the existence of unlabelled-node labels.
When \(U=\emptyset\), the latent space is
\[
\mathcal{Y}^{|U|}
=
\mathcal{Y}^{0}
=
\{\emptyset\},
\]
a singleton. Hence, there is no nontrivial E-step left for inferring
missing labels. Any remaining alternating update is then an implementation
choice for training LM and GNN components, rather than a non-degenerate
missing-label variational EM procedure.

\paragraph{Non-degeneracy of PLM under full supervision.}
PLM differs from GLEM in the role of the variational distribution. In supervised prediction, PLM treats all ground-truth  labels as latent variables, while the latent variables in GLEM are vanished. We denote the ground-truth  labels by
\[
y^{\mathrm{gt}}=(y^{\mathrm{gt}}_1,\ldots,y^{\mathrm{gt}}_N).
\]
These labels enter the supervised losses. Separately, PLM introduces an
auxiliary predictive label vector
\[
\tilde y=(\tilde y_1,\ldots,\tilde y_N),
\]
which is used only to couple the LM and GNN components. The LM
parameterizes a factorized auxiliary predictive distribution
\[
q_\theta(\tilde y\mid x)
=
\prod_{n=1}^N q_\theta(\tilde y_n\mid x_n),
\]
where \(x=\{x_n\}_{n=1}^N\) denotes the patient sequences. The GNN
parameterizes a graph-structured predictive distribution
\[
p_\phi(\tilde y\mid x,A),
\]
where \(A\) is the patient graph.

A supervised PLM objective may be written schematically as
\[
\begin{aligned}
&\mathcal{J}_{\mathrm{PLM}}(\theta,\phi) \\
&\quad =
\mathcal{L}_{\mathrm{sup}}^{\mathrm{LM}}
(\theta;y^{\mathrm{gt}})
+
\mathcal{L}_{\mathrm{sup}}^{\mathrm{GNN}}
(\phi;y^{\mathrm{gt}}) \\
&\qquad\quad
+
\gamma\,
\mathcal{L}_{\mathrm{coup}}(\theta,\phi),
\end{aligned}
\]
where \(\gamma\ge 0\) controls the strength of the LM--GNN coupling. The
coupling term is
\[
\begin{aligned}
&\mathcal{L}_{\mathrm{coup}}(\theta,\phi) \\
&\quad =
\mathbb{E}_{q_\theta(\tilde y\mid x)}
\left[
\log p_\phi(\tilde y\mid x,A)
-
\log q_\theta(\tilde y\mid x)
\right].
\end{aligned}
\]
Thus, \(y^{\mathrm{gt}}\) denotes the ground-truth labels, while
\(\tilde y\) denotes auxiliary predictive labels used for variational
coupling. The latter should not be interpreted as missing ground-truth.

When exact maximization of the graph-structured likelihood is intractable,
PLM may replace the graph likelihood with a pseudo-likelihood approximation,
leading to
\[
\begin{aligned}
&\mathcal{L}_{\mathrm{coup}}^{\mathrm{PLM}}(\theta,\phi) \\
&\quad =
\mathbb{E}_{q_\theta(\tilde y\mid x)}
\Big[
\sum_{n=1}^{N}
\log p_\phi
\left(
\tilde y_n
\mid
x_n,\tilde y_{\mathcal{N}(n)},A
\right) \\
&\qquad\qquad\quad
-
\sum_{n=1}^{N}
\log q_\theta(\tilde y_n\mid x_n)
\Big],
\end{aligned}
\]
where \(\mathcal{N}(n)\) denotes the neighbours of node \(n\) in the
patient graph.

\begin{proposition}[Non-collapse of the PLM auxiliary variational space]
\label{prop:plm_nondegeneracy}
Assume that \(q_\theta(\tilde y\mid x)\) is an auxiliary predictive
distribution used to couple the LM and GNN components, rather than a
posterior distribution over unobserved ground-truth labels. Then full
supervision alone does not make the PLM variational space collapse. In
particular, the PLM coupling objective reduces to ordinary supervised
likelihood only if \(q_\theta(\tilde y\mid x)\) is additionally constrained
to be a point mass at the observed labels.
\end{proposition}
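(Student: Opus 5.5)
The plan is to contrast the PLM coupling term with the GLEM computation in Proposition~\ref{prop:glem_degeneracy}. The degeneracy there came entirely from the latent label space being the singleton $\mathcal{Y}^{0}=\{\emptyset\}$. The first step is to observe that in PLM the auxiliary vector $\tilde y$ ranges over $\mathcal{Y}^N$ for every amount of supervision. This holds because $\tilde y$ is introduced separately from $y^{\mathrm{gt}}$ and is never conditioned on, nor identified with, the observed labels. The set of admissible factorized distributions $q_\theta(\tilde y\mid x)=\prod_n q_\theta(\tilde y_n\mid x_n)$ therefore contains non-point-mass members whenever $|\mathcal{Y}|\ge 2$. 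Full supervision only changes the terms $\mathcal{L}_{\mathrm{sup}}^{\mathrm{LM}}$ and $\mathcal{L}_{\mathrm{sup}}^{\mathrm{GNN}}$, which do not involve $\tilde y$. Hence the variational space is unaffected, and this gives the first claim.

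The second step expands $\mathcal{L}_{\mathrm{coup}}^{\mathrm{PLM}}$ for a general $q_\theta$. It equals a cross-entropy-like term $\sum_n \mathbb{E}_{q_\theta}[\log p_\phi(\tilde y_n\mid x_n,\tilde y_{\mathcal{N}(n)},A)]$ plus the entropy term $\sum_n H(q_\theta(\cdot\mid x_n))$. I would give a concrete witness of non-reduction. Take $N=1$ or a graph with no edges, $|\mathcal{Y}|=2$, and $q_\theta(\cdot\mid x_n)$ uniform. The entropy is then $N\log 2>0$, and the objective depends on all values of $p_\phi(\cdot\mid\cdots)$, not only on $p_\phi(y^{\mathrm{gt}}_n\mid\cdots)$. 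So $\mathcal{J}_{\mathrm{PLM}}$ is not a function of the supervised likelihood alone. This shows the objective does not reduce to ordinary supervised likelihood without further constraints.

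The third step handles the ``only if'' direction together with its converse. If $q_\theta(\tilde y\mid x)=\delta_{y^{\mathrm{gt}}}(\tilde y)$, the entropy vanishes and the expectation collapses, giving
\[
\mathcal{L}_{\mathrm{coup}}^{\mathrm{PLM}}=\sum_n\log p_\phi(y^{\mathrm{gt}}_n\mid x_n,y^{\mathrm{gt}}_{\mathcal{N}(n)},A),
\]
which is the supervised pseudo-likelihood. This mirrors the computation in Proposition~\ref{prop:glem_degeneracy}. For necessity, suppose that for every $\phi$ the coupling term equals this supervised expression. Comparing the two as functionals of $p_\phi$, I would vary $p_\phi$ on a single label configuration $\tilde y\ne y^{\mathrm{gt}}$. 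This forces $q_\theta(\tilde y\mid x)=0$ for all $\tilde y\neq y^{\mathrm{gt}}$, so $q_\theta$ is the point mass.

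The main obstacle is making ``reduces to'' precise enough for the necessity argument. The natural reading is an identity in $\phi$ over a sufficiently rich GNN family, one able to perturb the conditional probability of a single non-observed configuration independently of the others. I would state that richness assumption explicitly. Any $q_\theta$ with support on some $\tilde y\neq y^{\mathrm{gt}}$ then yields a $\phi$-dependent discrepancy. The positive-entropy case is handled by the same argument, since any non-point-mass $q_\theta$ has such support.
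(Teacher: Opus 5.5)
Your proposal is correct, and for the ``only if'' direction it is more complete than the paper's proof.

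\textbf{What the paper does.} It contrasts the singleton space $\mathcal{Y}^{0}$ with the nontrivial space $\mathcal{Y}^{N}$, which is your first step. It remarks that the entropy is ``generally nonzero'' and that the coupling ``continues to depend jointly'' on $q_\theta$ and $p_\phi$. It then checks that $q_\theta=\delta_{y^{\mathrm{gt}}}$ collapses the joint coupling term to $\log p_\phi(y^{\mathrm{gt}}\mid x,A)$. So it proves that the point-mass constraint suffices, but only asserts that it is necessary.

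\textbf{What you do differently.} You make ``reduces to'' precise as an identity in $\phi$. You also state the richness assumption on the GNN family that this direction genuinely needs: if the family were a single $\phi$, every $q_\theta$ would satisfy the identity up to a constant. Finally, you locate the discrepancy in the $\phi$-dependent cross-entropy term rather than in the entropy. The entropy does not depend on $\phi$, so by itself it could only shift the objective by a constant. This is why your argument still works under an ``up to an additive constant'' reading. The paper's version is shorter and matches the schematic level of the surrounding appendix; yours actually proves the proposition as stated.

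\textbf{Two details to tighten.}

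First, the binary witness. With normalized outputs, $p_\phi(y'\mid\cdot)=1-p_\phi(y^{\mathrm{gt}}_n\mid\cdot)$. So in that example the coupling term \emph{is} a function of $p_\phi(y^{\mathrm{gt}}_n\mid\cdot)$ alone, and your sentence saying otherwise is literally false. The correct claim is that $\tfrac12\log p+\tfrac12\log(1-p)+\log 2$ is not $\log p$ plus a constant; for instance, its maximizer is $p=\tfrac12$. Alternatively, take $|\mathcal{Y}|\ge 3$.

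Second, the necessity step. Normalization also means you cannot perturb one probability ``independently of the others.'' Instead, work with the local conditional $r=p_\phi(\cdot\mid x_n,y^{\mathrm{gt}}_{\mathcal{N}(n)},A)$ and move mass $\epsilon$ from $y^{\mathrm{gt}}_n$ to some $y'\neq y^{\mathrm{gt}}_n$. Write $q_n:=q_\theta(\cdot\mid x_n)$, and let $\pi$ be the $q_\theta$-probability of the observed neighbour configuration. Matching first-order changes of the two sides requires $\pi\,q_n(y')/r(y')=(\pi\,q_n(y^{\mathrm{gt}}_n)-1)/r(y^{\mathrm{gt}}_n)$. The left side is nonnegative and the right side nonpositive, so both vanish. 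This forces $\pi=1$ and $q_n=\delta_{y^{\mathrm{gt}}_n}$ for every $n$, hence $q_\theta=\delta_{y^{\mathrm{gt}}}$.
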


\begin{proof}
In GLEM, the variational distribution is defined only over the missing
labels \(y_U\) of unlabelled nodes. Thus, when \(U=\emptyset\), the
variational space collapses to a singleton. In PLM, by contrast, PLM defines auxiliary predictive distribution over
\[
\tilde y=(\tilde y_1,\ldots,\tilde y_N)
\in \mathcal{Y}^{N}.
\]
This space remains nontrivial whenever \(N>0\) and \(|\mathcal{Y}|>1\).
The fact that the ground-truth labels \(y^{\mathrm{obs}}\) are observed
does not by itself imply that the auxiliary predictive distribution
\(q_\theta(\tilde y\mid x)\) is degenerate.

Consequently, the entropy term
\[
-
\mathbb{E}_{q_\theta(\tilde y\mid x)}
\log q_\theta(\tilde y\mid x)
\]
is generally nonzero, and the coupling term
\[
\mathbb{E}_{q_\theta(\tilde y\mid x)}
\log p_\phi(\tilde y\mid x,A)
\]
continues to depend jointly on the LM-induced predictive distribution and
the GNN-induced structured predictive distribution. Hence, PLM's
objective contains a nontrivial LM-GNN alignment term even under full
supervision.

Only under the additional degenerate constraint
\[
q_\theta(\tilde y\mid x)
=
\delta_{y^{\mathrm{gt}}}(\tilde y)
\]
does the coupling expectation collapse to
\[
\mathcal{L}_{\mathrm{coup}}(\theta,\phi)
=
\log p_\phi(y^{\mathrm{gt}}\mid x,A),
\]
up to the zero entropy of the point mass. Therefore, PLM reduces to an
ordinary supervised graph-predictive likelihood only after imposing this
additional degeneracy condition on the auxiliary predictive distribution,
not merely because all labels are observed.
\end{proof}

The distinction can therefore be summarized as follows. In GLEM, the
variational space is \(\mathcal{Y}^{|U|}\), which collapses to the singleton
\(\{\emptyset\}\) under full supervision. In PLM, the auxiliary predictive
space is \(\mathcal{Y}^{N}\), which remains nontrivial under full
supervision. Thus, full supervision removes the missing-label latent
variables in GLEM, but it does not remove the auxiliary predictive
distribution used by PLM to align the LM and GNN.

Accordingly, PLM should be viewed as a supervised VEM-inspired coupling
framework rather than as a classical incomplete-data EM algorithm. Its
novelty is not the use of EM itself, but the construction of a supervised
variational coupling objective that remains meaningful when all training
labels are observed.

\begin{table*}[t]
\centering
\caption{Comparison between Patients-like-me and GLEM.}
\label{tab:plm_vs_glem}
\setlength{\tabcolsep}{5pt}
\renewcommand{\arraystretch}{1.2}
\begin{tabular}{p{0.20\linewidth}p{0.36\linewidth}p{0.36\linewidth}}
\toprule
Aspect & Patients-like-me & GLEM \\
\midrule
Objective 
& Supervised prediction of all labels $y$
& Semi-supervised completion of missing labels $y_U$ \\
\midrule
Observed variables
& Sequences $x$
& Node text features $s_V$ and labels on labeled nodes $y_L$ \\
\midrule
Latent variables
& Labels $y$ 
& Missing labels $y_U$ \\
\midrule
ELBO
& $\mathbb{E}_{q_\theta(y\mid x)}\big[\log p_\phi(x, y) - \log q_\theta(y\mid x)\big]$
& $\mathbb{E}_{q_\theta(y_U\mid s_U)}\big[\log p_\phi(y_L, y_U \mid s_V) - \log q_\theta(y_U\mid s_U)\big]$ \\
\midrule
E-step
& Generate patient embedding via LM $q_\theta(y \mid x)$ guided by graph structural 
& Fit LM $q_\theta(y_U \mid s_U)$ given GNN-imputed missing labels \\\midrule
M-step
& Predict targets via GNN $p_\phi(y \mid x)$ using LM embeddings
& Fit GNN $p_\phi$ using LM-imputed missing labels \\
\midrule
Under full supervision
& EM maintained via latent targets $y$ 
& EM collapse to joint training due to empty latent variables \\
\bottomrule
\end{tabular}
\end{table*}

\section{Details of Patients-Like-Me}

\subsection{Prompt Template}
\label{app:prompt}

\begin{tcolorbox}[
  enhanced,
  colback=white,
  colframe=gray!60,
  boxrule=0.9pt,
  arc=3mm,
  left=4mm,right=4mm,top=5mm,bottom=4mm,
  width=\linewidth,
  before upper={\raggedright},
  title=Prompt Template for Readmission Prediction,
  colbacktitle=gray!70,
  coltitle=white,
  fonttitle=\Large\rmfamily,
  boxed title style={boxrule=0pt,arc=3mm,left=4mm,right=4mm,top=2mm,bottom=2mm},
]
\noindent \textbf{Demographics:} Insurance: \ph{INSURANCE}; Language: \ph{LANGUAGE};\\
Religion: \ph{RELIGION}; Marital status: \ph{MARITAL\_STATUS}; Ethnicity: \ph{ETHNICITY}. \par 
\noindent The patient has \ph{VISIT\_NUM} ICU visits. \par 
\textcolor{blue}{\noindent In visit 1, diagnoses: \ph{DIAG\_NAME}, \ldots; procedures: \ph{PROC\_NAME}, \ldots; medications: \ph{MED\_NAME}, \ldots.}\par
\textcolor{blue}{\noindent \ldots}\par
\noindent Current visit: diagnoses: \ph{DIAG\_NAME}, \ldots; procedures: \ph{PROC\_NAME},  medications: \ph{MED\_NAME} \ldots. \\
\noindent \textbf{Task:} Predict whether the patient will be readmitted within 15 days after the current visit.
Choose exactly one class from \{0,1\}, where 0 = no readmission within 15 days and 1 = readmission within 15 days. \\
\noindent \textbf{Output format:} A single class label in \{0,1\}.
\end{tcolorbox}

\begin{tcolorbox}[
  enhanced,
  colback=white,
  colframe=gray!60,
  boxrule=0.9pt,
  arc=3mm,
  left=4mm,right=4mm,top=5mm,bottom=4mm,
  width=\linewidth,
  before upper={\raggedright},
  title=Prompt Template for Length-of-Stay Prediction,
  colbacktitle=gray!70,
  coltitle=white,
  fonttitle=\Large\rmfamily,
  boxed title style={boxrule=0pt,arc=3mm,left=4mm,right=4mm,top=2mm,bottom=2mm},
]
\noindent \textbf{Demographics:} Insurance: \ph{INSURANCE}; Language: \ph{LANGUAGE};\\
Religion: \ph{RELIGION}; Marital status: \ph{MARITAL\_STATUS}; Ethnicity: \ph{ETHNICITY}. \par
\noindent The patient has \ph{VISIT\_NUM} ICU visits. \par
\textcolor{blue}{\noindent In visit 1, diagnoses: \ph{DIAG\_NAME}, \ldots; procedures: \ph{PROC\_NAME}, \ldots; medications: \ph{MED\_NAME}, \ldots.}\par
\textcolor{blue}{\noindent \ldots}\par
\noindent Current visit: diagnoses: \ph{DIAG\_NAME}, \ldots; procedures: \ph{PROC\_NAME}, \ldots; medications: \ph{MED\_NAME}, \ldots. \par
\noindent \textbf{Task:} Predict the ICU length-of-stay (LOS) category for the current visit. \\
\noindent Choose exactly one class from \{0,1,2,3,4,5,6,7,8,9\}, where: \\
\noindent 0: LOS $<$ 1 day; \\
\noindent 1: LOS = 1 day; \\
\noindent 2: LOS = 2 days; \\
\noindent 3: LOS = 3 days; \\
\noindent 4: LOS = 4 days; \\
\noindent 5: LOS = 5 days; \\
\noindent 6: LOS = 6 days; \\
\noindent 7: LOS = 7 days; \\
\noindent 8: LOS = 8-14 days; \\
\noindent 9: LOS $\ge$ 15 days. \par 
\noindent \textbf{Output format:} A single class label in \{0,1,2,3,4,5,6,7,8,9\}.
\end{tcolorbox}

\begin{tcolorbox}[
  enhanced,
  colback=white,
  colframe=gray!60,
  boxrule=0.9pt,
  arc=3mm,
  left=4mm,right=4mm,top=5mm,bottom=4mm,
  width=\linewidth,
  before upper={\raggedright},
  title=Prompt Template for Drug Recommendation,
  colbacktitle=gray!70,
  coltitle=white,
  fonttitle=\Large\rmfamily,
  boxed title style={boxrule=0pt,arc=3mm,left=4mm,right=4mm,top=2mm,bottom=2mm},
]
\noindent \textbf{Demographics:} Insurance: \ph{INSURANCE}; Language: \ph{LANGUAGE};\\
Religion: \ph{RELIGION}; Marital status: \ph{MARITAL\_STATUS}; Ethnicity: \ph{ETHNICITY}. \par 
\noindent The patient has \ph{VISIT\_NUM} ICU visits. \par 
\textcolor{blue}{\noindent In visit 1, diagnoses: \ph{DIAG\_NAME}, \ldots; procedures: \ph{PROC\_NAME}, \ldots; medications: \ph{MED\_NAME}, \ldots.}\par
\textcolor{blue}{\noindent \ldots}\par
\noindent Current visit: diagnoses: \ph{DIAG\_NAME}, \ldots; procedures: \ph{PROC\_NAME}, \ldots. \\
\noindent \textbf{Task:} Predict the medications to prescribe for the current visit. Output a multi-label set of \textbf{3-level ATC codes}. Choose any number of codes from the medication vocabulary. Note that a patient may have multiple medications.\\
\noindent \textbf{Output format:} A multi-label set of ATC level-3 codes.
\end{tcolorbox}

\section{Additional Experimental Details}
\label{app:exp}
\subsection{Datasets and Cohort Statistics}
\label{app:dataset_stats}

We used two large-scale, de-identified EHR datasets MIMIC-III and MIMIC-IV from Beth Israel Deaconess Medical Center (BIDMC), released via PhysioNet under a credentialed data use agreement
\citep{johnson2016mimic,johnson2023mimiciv}. Both datasets contain inpatient EHR records, including diagnoses, procedures, medication prescriptions,
and admission/discharge timestamps. Patient identifiers are removed and all timestamps are shifted by a patient-specific offset. This preserves relative temporal ordering within each patient sequence while preventing alignment of absolute calendar time across patients. We construct visit-based patient sequences by treating each inpatient admission as a visit. Table~\ref{tab:mimic_basic_stats} summarizes the basic cohort statistics after preprocessing.

\begin{table*}[th]
\centering
\caption{Basic statistics of the preprocessed MIMIC-III and MIMIC-IV datasets. ``\#'' denotes counts and ``/patient'' denotes the average number of records per-patient.}
\label{tab:mimic_basic_stats}
\small
\setlength{\tabcolsep}{5pt}
\resizebox{\textwidth}{!}{%
\begin{tabular}{lcccccc}
\toprule
Dataset
& \#patients
& \#visits
& \#visits/patient
& \#conditions/patient
& \#procedures/patient
& \#drugs/patient \\
\midrule
MIMIC-III
& 35,707 & 44,399 & 1.24 & 12.89 & 4.54 & 33.71 \\
MIMIC-IV
& 123,488 & 232,263 & 1.88 & 21.74 & 4.70 & 43.89 \\
\bottomrule
\end{tabular}%
}
\end{table*}
\subsection{Preprocessing and Concept Normalization}
\label{app:preprocess}

We applied the same preprocessing pipeline to both MIMIC-III and MIMIC-IV, including cohort selection, code normalization, and data splitting, to obtain consistent visit-based patient EHR sequences~\citep{graphcare2024}. We used the same normalization procedure across datasets to ensure a fair comparison.

\paragraph{Cohort construction and splitting.}
We treat each inpatient admission as a visit and represent each patient record as an ordered sequence of visits.
We perform patient-level train/validation/test splits so that all visits from the same patient appear in only one split, avoiding information leakage across splits. We randomly divide the dataset into training, validation and test sets in a 60\%: 20\%: 20\% ratio. After splitting patients, we independently construct $G_{\mathrm{train}}$, $G_{\mathrm{val}}$, and $G_{\mathrm{test}}$ using patients within each split.

\paragraph{Diagnosis and procedure codes.}
For each visit, we extract diagnoses and procedures from the corresponding MIMIC tables and represent them as unordered sets of codes.
We retain the original ICD coding systems (ICD-9-CM and ICD-10-CM for diagnoses; ICD-9 and ICD-10-PCS for
procedures) and preserve the recorded ICD version to avoid conflating codes across systems. We map ICD-9/10 diagnosis and procedure codes to CCS categories, and de-duplicate repeated codes within each visit. Specifically, 11,736 ICD-9-CM and 72,446 ICD-10-CM diagnosis codes are mapped to 285 CCS condition categories, while 4,670 ICD-9-PROC and 79,758 ICD-10-PCS procedure codes are mapped to 231 CCS procedure categories, yielding 285 condition and 231 procedure concepts in total.

\paragraph{Medication mapping and normalization.}
Medications are extracted from the prescriptions tables, where drugs are originally recorded as National Drug
Codes (NDC).
Following GraphCare~\citep{graphcare2024}, we standardize medications using a two-stage mapping pipeline:
\textbf{NDC $\rightarrow$ RxNorm (RxCUI) $\rightarrow$ ATC}.
We keep \textbf{ATC level-3} codes to preserve fine-grained medication semantics.
Unmappable NDC entries (e.g., non-drug supplies, institution-specific packaging codes, or missing references)
are excluded from the medication representation.
This yields 269 unique drugs represented as ATC level-3 codes.

\paragraph{Visit-level feature construction.}
Each visit $x_i$ is represented by three unordered sets: diagnosis codes, procedure codes, and mapped ATC-3
medication codes.
These per-visit sets are then composed into a visit-based patient history $\{x_1,\ldots,x_t\}$ by admission time.
Basic cohort statistics after preprocessing are reported in Appendix~\ref{app:dataset_stats}.

\paragraph{Task instance generation.}
Given a patient sequence $\{x_1,\ldots,x_t\}$, we generate training instances from visit prefixes $(x_{1:i})$ and apply
supervision on the subsequent visit, consistent with the task definitions in Appendix~\ref{app:tasks}. For each task instance, to avoid future information leakage, graph edges are computed using only diagnosis codes available within the corresponding input prefix.

\subsection{Task Definitions and Metric Computation}
\label{app:tasks}

We follow the task formulation and evaluation protocol as prior work~\citep{graphcare2024}. Given a multi-visit patient sequence $\{x_1,\ldots,x_t\}$, we treat each visit $x_i$ as the current visit and construct a multi-visit inputs $\{(x_1), (x_1,x_2), \ldots, (x_1,\ldots,x_t)\}$ for prediction. Unless otherwise stated, we report the average metrics and standard deviation over 10 independent runs, as shown in
Table~\ref{tab:results_mimic34}.

\paragraph{Readmission prediction.} For each visit $x_i$ with $i \le t-1$, $\tau(x_i)$ denote the admission time of visit $x_i$ (in the de-identified timeline, preserving within-patient time intervals). For multi-visit patient sequence $(x_{1:i})$ with $i\le t-1$, we define the readmission label as
\begin{equation}
y^{\textsc{readm}}_{i}=\mathbb{I}\left[\tau(x_{i+1})-\tau(x_i)\le \sigma\right],
\end{equation}
where we set $\sigma=15$ days in our experiments, and the label indicates whether a subsequent visit occurs within the readmission window.
We evaluate readmission using AUPRC and AUROC on both MIMIC-III and MIMIC-IV datasets.

\paragraph{LOS prediction.}
LOS prediction is formulated as a $C$-category classification problem with $C=10$ classes, corresponding to stays of $<1$ day (0), $1$-$7$ days (1-7), $8$-$14$ days (8), and $\ge 15$ days (9).
For each visit $x_i$, the goal is to predict the LOS category of the current visit $x_{i}$. We report AUPRC and (macro-)F1 scores on both MIMIC-III and MIMIC-IV datasets.

\paragraph{Drug recommendation.}
Drug recommendation is formulated as a multi-label classification task. Let $\mathcal{V}_{\textsc{drug}}$ denote the medication vocabulary and $S_i \subseteq \mathcal{V}_{\textsc{drug}}$ denote the medication set prescribed at the current visit $x_i$. The target is represented as a multi-hot vector
$y^{\textsc{drug}}_{i} \in \{0,1\}^{|\mathcal{V}_{\textsc{drug}}|}$. For  the current visit $x_i$, the model input consists of all EHR codes from prior visits $\{x_1,\ldots,x_{i-1}\}$, including their diagnosis, procedure, and medication codes, together with the diagnosis and procedure codes observed at the current visit $x_i$.The medication set prescribed at the current visit, denoted by $S_i$, are excluded from the input and serve as the prediction target.  We report AUPRC, F1, and Jaccard on both datasets. For F1 and Jaccard, we convert predicted probabilities to a binary set using a fixed threshold of $0.2$. Jaccard is computed per instance as $\frac{|S_i\cap \hat{S}_i|}{|S_i\cup \hat{S}_i|}$ and averaged over the test set.

\paragraph{Training objectives.}
We use binary cross-entropy (BCE) with sigmoid for readmission and drug recommendation, and cross-entropy (CE) with
softmax for the multi-class LOS task.

\subsection{Baselines and Reproducibility Notes}
\label{app:baselines}

We describe the baseline implementations and the evaluation protocol used to ensure fair comparison.

\paragraph{Baselines.}
We compare against representative patient-level sequence models, namely Deepr~\citep{nguyen2017deepr},
RETAIN~\citep{RETAIN}, GRAM~\citep{GRAM}, StageNet~\citep{gao2020stagenet},AdaCare~\citep{Ma2020-gf}, and GRASP~\citep{zhang2021grasp}. We further compare with LM-graph models that incorporate external medical knowledge or graph, including G-BERT~\citep{shang2019gbert}, LEADER~\citep{liu2024leader}, GLEM~\citep{zhao2022glem}, GraphCare~\citep{graphcare2024}, KARE~\citep{KARE}, ColaCare\citep{ColaCare}. For drug recommendation, we also include task-specific models SafeDrug~\citep{yang2021safedrug}, MICRON~\citep{yang2021micron},
GAMENet~\citep{shang2019gamenet}, MoleRec~\citep{yang2023molerec}, and UDC~\citep{UDC}.

\paragraph{Implementation sources and training protocol.}
Where official implementations are available, we follow the authors' released code and default settings.
Otherwise, we re-implement baselines based on the original papers.
All methods are trained and evaluated under the same preprocessing pipeline, patient-level data split,
and task instance construction (Appendix~\ref{app:preprocess}-\ref{app:tasks}).
Hyperparameters for each baseline are tuned on the validation set and the selected configuration is used for
final testing.
Unless stated otherwise, we use early stopping based on the validation metric corresponding to each task and
report results over multiple runs (Table~\ref{tab:results_mimic34}).

\paragraph{Input alignment across methods.}
For sequence-based baselines such as Deepr, RETAIN, GRAM, and StageNet, each visit is represented with the same
set-valued clinical concepts used by our method, and visits are ordered chronologically. LM/graph-enhanced baselines
such as G-BERT, LEADER, GLEM, and GraphCare are evaluated using the same normalized concept vocabularies and the
same patient-level split to ensure comparability.

\paragraph{Applicability constraints.}
AdaCare and GRASP are designed for binary clinical classification and do not directly apply to the multi-class LOS
or multi-label drug recommendation settings under our formulation; therefore, we report their results only on the
readmission task, consistent with Table~\ref{tab:results_mimic34}.

\paragraph{Drug-specific knowledge.}
For drug recommendation baselines that rely on drug-drug interaction (DDI) knowledge (e.g., SafeDrug and GAMENet),
we follow the original papers and corresponding implementations to construct the required drug graphs/knowledge,
while keeping the same ATC-level medication normalization used in our preprocessing (Appendix~\ref{app:preprocess}).
All methods are evaluated on the same target medication space.

\paragraph{Reproducibility.}
We fix random seeds for each run and report mean and standard deviation across 10 runs. 
Complete hyperparameters, training budgets, and hardware details are provided in Appendix~\ref{app:impl}.

\subsection{Implementation Details and Hyperparameters}
\label{app:impl}

This section summarizes model configurations, optimization settings, and reproducibility details.

\paragraph{Backbone language models.}
We instantiate our framework with two biomedical language models:
(i) an encoder-only \textit{BioBERT-large} model (\texttt{biobert-large-cased-v1.1}) and
(ii) a decoder-only \textit{BioMistral} model.
For BioBERT-large, we fine-tune the last six Transformer layers together with a task-specific prediction head,
while freezing all earlier layers.
For BioMistral, we extract sequence-level representations using LLM2Vec~\citep{llm2vec} and fine-tune the model
via parameter-efficient LoRA adapters, freezing all non-adapted parameters.

\paragraph{LLM2Vec configuration.}
We follow the standard LLM2Vec setup to obtain fixed-length sequence embeddings from the decoder-only backbone.
Specifically, we use the last-layer hidden states and apply mean pooling over the token dimension,
followed by a linear projection to match the hidden dimension of the downstream graph model.
This configuration is shared across all tasks and datasets.

\paragraph{LoRA configuration (BioMistral).}
LoRA adapters are applied to the attention projection layers of BioMistral.
Unless otherwise stated, we use a uniform configuration across tasks:
rank $r=8$, scaling factor $\alpha=16$, and LoRA dropout $0.1$.
Adapters are inserted into the $\{q\_proj, k\_proj, v\_proj, o\_proj\}$ modules of each attention block.

\paragraph{GNN.}
We use a standard three-layer GCN with a hidden dimension of 128 and ReLU activation, followed by a softmax classifier~\citep{kipf2017semi}. The GCN layer updates node representations based on a weighted patient graph with adjacency $A$, where $A_{ij}\ge 0$ encodes patient similarity:
\begin{equation}
\begin{split}
H^{(\ell+1)}&=\sigma\!\left(\tilde{D}^{-\frac12}\tilde{A}\tilde{D}^{-\frac12} H^{(\ell)} W^{(\ell)}\right), \\
\tilde{A}&=A+I,
\end{split}
\label{eq:gcn}
\end{equation}
where $\tilde{D}$ is the degree matrix of $\tilde{A}$.
Equivalently, for node $i$ (ignoring $\sigma$),
\begin{equation}
h_i^{(\ell+1)}=\sum_{j\in \mathcal{N}(i)\cup\{i\}}
\frac{\tilde{A}_{ij}}{\sqrt{\tilde{d}_i\,\tilde{d}_j}}\; h_j^{(\ell)} W^{(\ell)} .
\label{eq:gcn_nodewise}
\end{equation}
This formulation naturally supports weighted patient graphs: the edge weight $A_{ij}$ enters directly through $\tilde{A}_{ij}$ and modulates the strength of message passing between patients.

Since the LM produces higher-dimensional embeddings (e.g., 768 for BioBERT and 4096 for BioMistral), we use an MLP projection to map them to 128 as the GCN input.

\paragraph{Optimization and training.}
All models are optimized using AdamW.
We use separate learning rates for the language model and the graph/prediction components:
$\eta_{\textsc{lm}} = 1\times10^{-5}$ and $\eta_{\textsc{gnn}} = 1\times10^{-3}$.
Weight decay is set to $1\times10^{-2}$.
We train with batch size $32$ and use gradient accumulation of $2$ steps when necessary.
The maximum input sequence length is capped at $512$ tokens.
We adopt a linear learning-rate scheduler with a warmup ratio of $10\%$ of the total training steps.
Early stopping is applied based on the primary validation metric of each task with a patience of $5$ epochs.

\paragraph{Training budget and hardware.}
All experiments are conducted on NVIDIA Dgx Spark GPU with 128GB of memory.
We train models for up to $10$ epochs with early stopping.
Mixed-precision training (FP16) is used to improve training efficiency.

\paragraph{Random seeds and reporting.}
We use 10 random seeds for 10 runs, with one seed assigned to each run. The same seeds are used across all methods and baselines, and we report the mean and standard deviation over the 10 runs.

\subsection{Details of the Weighted Patient Graph}
\label{app:weighted_graph}

We provide additional details on the construction of the weighted patient graph. Following prior work~\citep{Lu2021-re}, we first represent the cohort as a bipartite graph between patients and diagnosis codes. A bipartite graph is a special class of graph consisting of two disjoint sets of vertices. In this study, we use an undirected bipartite graph to represent relationships between patients and diseases. Here, we project the bipartite graph into the patient side, to construct a  weighted patient network: two patients are connected if they share at least one diagnosed disease, and the edge weight equals the number of diseases they have in common (i.e., their number of shared neighbors). To avoid counting repeated occurrences, we compute edge weights using unique diagnosis codes per patient.
As illustrated in Fig.~\ref{fig:outline}b, patients $1$ and $2$ have one EHR code, resulting in a weight of 1 between them; patients $3$ and $4$ have a weight of 2 due to two shared EHR codes. In the projected graph, each patient node retains its original properties, and two patients are connected if they share at least one neighbor (i.e., a diagnosis code) in the original bipartite graph. Prior studies suggest that diseases that co-occur can reflect shared biological mechanisms (e.g., via disease–gene interactions) and that patients with the same chronic conditions often share common risk factors (e.g., smoking history, obesity, and insufficient physical activity)~\citep{Lu2021-re}. Motivated by these observations, we use the weighted patient network to capture latent patient–patient relationships induced by shared clinical profiles.

We deliberately construct the patient graph from diagnosis overlap only. This design is leakage-safe across our tasks: incorporating current-visit medications would directly leak targets for drug recommendation, while incorporating procedures or other current-visit signals could partially reveal the LOS target. We therefore use diagnosis overlap as a simple, task-agnostic cohort prior that remains compatible with all three prediction settings. Richer graph signals, such as temporality, disease severity, or demographics, may further improve graph quality, but they are not straightforward to use with the MIMIC datasets.

\begin{figure}[t]
  \centering
  \includegraphics[width=\linewidth]{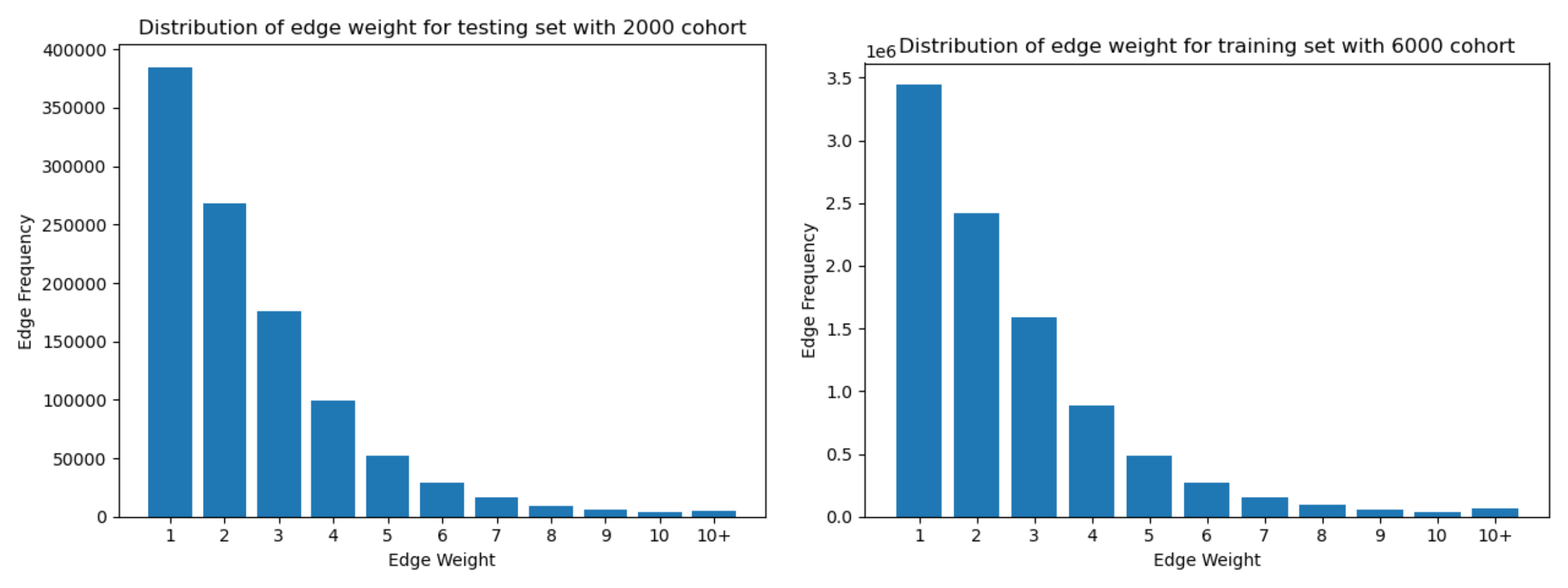}
  \caption{Edge-weight distributions in the weighted patient graphs for the training cohort (6{,}000 patients) and test cohort (2{,}000 patients). Edge weights are defined as the number of shared diagnosis codes between two patients and are binned as 1-10 and 10+. The two cohorts exhibit highly similar distributions with Jensen–Shannon Divergence of 0.0097 and Total Variation Distance of 0.0052, suggesting minimal practical distribution shift.}
  \label{fig:edge_weight_dist}
\end{figure}

Fig.~\ref{fig:edge_weight_dist} shows the edge-weight distributions of the weighted patient graphs constructed from 6,000 training patients and 2,000 test patients sampled from MIMIC-III. The histogram is heavily skewed toward small weights (primarily 1–3 shared diagnosis codes), and the frequency drops rapidly as the weight increases. This long-tailed pattern indicates a sparse graph in which most patient pairs are only weakly connected, with a small minority exhibiting strong similarity through many shared diagnoses.We compare the edge-weight distributions of the training and test patient graphs using a G-test on binned weights. The distributions are highly similar, with Jensen–Shannon Divergence of 0.0097 and Total Variation Distance of 0.0052, indicating minimal distribution shift. This indicates similar graph sparsity and edge-weight patterns across the training and test cohorts.

To keep the resulting patient graph sparse, we further sparsify the projected graph by retaining an edge only when its weight exceeds a minimum threshold, i.e., $A_{ij}\ge \tau$. Based on the distribution in Fig.~\ref{fig:edge_weight_dist}, we set $\tau=8$ to filter out the large number of weak connections. This choice yields a substantially sparser patient graph, with an average node degree on the order of $\sim$8 in the sampled cohort, while preserving clinically meaningful links among genuinely similar patients.

The training, validation, and test graphs are constructed independently.
The model is trained only on $G_{\mathrm{train}}$. During testing, all
test patients are encoded by the fixed LM and jointly processed by the
fixed GNN on $G_{\mathrm{test}}$. Test--test edges therefore enable
representation-level message passing, while test labels are never used
during inference.

\subsection{Additional Details on Interpretability Analysis}
\label{app:interpret}

\begin{figure*}[t]
  \centering
  \includegraphics[width=\linewidth]{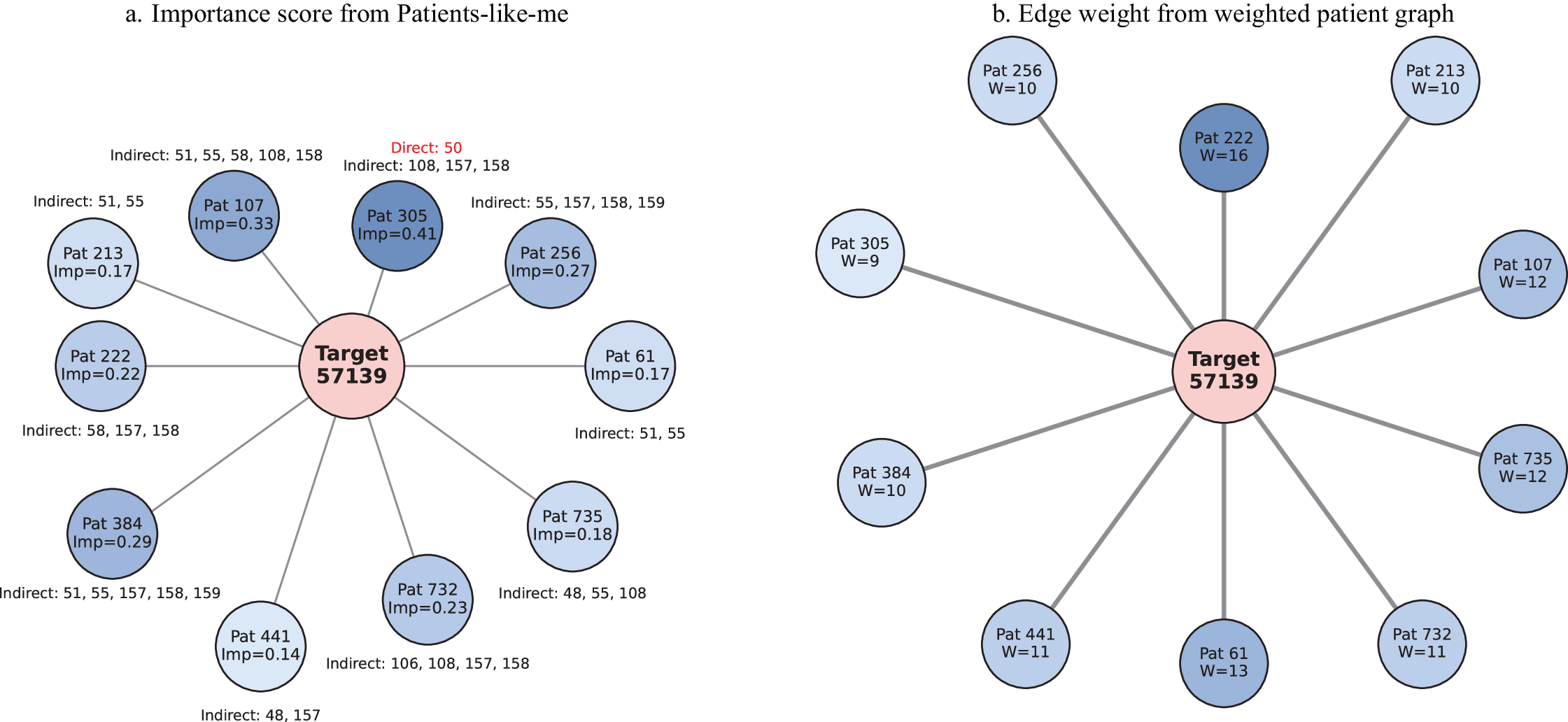}
  \caption{Comparison of two explanation signals for the same target patient (\texttt{57139}). 
  \textbf{(a)} \emph{Patients-like-me} ranks reference patients and assigns a patient-level importance score (Imp) indicating each reference patient's contribution to the target prediction.
  \textbf{(b)} Baseline visualization using the weighted patient graph, where the edge weight $A_{ij}$ equals the number of shared diagnosis codes between the target patient and each reference patient.}
  \label{fig:imp_wpg}
\end{figure*}

Figure~\ref{fig:imp_wpg} contrasts two explanations for the same target patient 57139 in the insulin drug recommendation task. We visualize an ego-network comprising the target node and its 10 reference patients retrieved by Patients-like-me. Each reference node is labeled with its importance score (Imp). The higher Imp is encoded by a shorter edge length (closer to the target) and a darker blue node color, enabling qualitative comparison across references.

In panel (a), the target patient is placed at the center and the 10 reference patients are arranged around it for clarity. Each reference node is annotated with an importance score that quantifies its influence on the target prediction. To improve interpretability, we additionally annotate a the patient-level clinical evidence using CCS codes:
(i) \textcolor{red}{direct diabetes CCS evidence} that is strongly associated with insulin use (e.g., \texttt{50: Diabetes mellitus with complications}), and 
(ii) indirectly related CCS categories capturing endocrine conditions or diabetes-related complications (e.g., chronic kidney disease). 
These annotations are extracted from the mapped CCS codes in each patient’s structured record and shown near each reference node. Panel (b) provides a structural baseline explanation derived from the weighted patient graph used by our model. We use the same target patient and the same set of 10 reference patients, but replace the Patients-like-me importance score with the edge weight $A_{ij}$ between patients $i$ and $j$. We visualize the same ego graph, encoding the edge weight $A_{ij}$ by edge length (larger $A_{ij}$ yields a shorter edge, indicating stronger structural proximity), and using node color to reflect the same weight for consistency. This structural baseline reflects a common assumption in cohort graphs: patients are considered more related if they share more structured codes.

In panel (a), Patients-like-me assigns the highest importance to patient 305 with an importance score 0.41, which is clinically aligned with the presence of direct diabetes CCS code 50. Moreover, patients 107 and 384 also receive high importance scores (0.33 and 0.29), consistent with multiple indirectly related endocrine conditions and diabetes complications annotated near those nodes. Other reference patients have fewer indirectly related CCS codes and therefore receive lower importance scores. In contrast, the structural baseline in panel (b) ranks references purely by code overlap, , which can yield less informative explanations. For example, patient 305, although the most influential reference in panel (a), has only an edge weight of 9, only slightly above the sparsification threshold. This indicates that edge weights alone only capture structural shared-code overlap and can under-emphasize clinically salient evidence that drives the model’s prediction. Overall, Fig.~\ref{fig:imp_wpg} shows that Patients-like-me provides explanations by retrieving reference patients that are not only structurally similar in the cohort graph but also clinically and semantically relevant to the target prediction. This comparison highlights that our patient-level importance scores are more informative than explanations based only on graph connectivity.

\subsection{Monte Carlo Sampling Analysis}
\label{app:sampling_analy}

We next examine sensitivity to the number of Monte Carlo samples used in the E-step. The expectation in Eq.~\ref{eq:e-step-obj} is taken over latent labels $y$, not over high-dimensional continuous LM embeddings; the LM embedding $o_n$ and GNN representation $h_n^{(L)}$ are deterministic forward-pass outputs. Table~\ref{tab:mc_samples} shows that increasing the estimator from one to five samples on MIMIC-IV readmission yields only modest gains, supporting our default single-sample setting.

\begin{table}[H]
\centering
\caption{\textbf{Sensitivity to the number of Monte Carlo samples in the E-step.} Results are shown on the MIMIC-IV readmission task with the BioBERT backbone.}
\label{tab:mc_samples}
\small
\setlength{\tabcolsep}{3pt}
\renewcommand{\arraystretch}{1.1}
\begin{tabular}{lcc}
\toprule
\textbf{Number of MC samples} & \textbf{AUPRC (\%)} & \textbf{AUROC (\%)} \\
\midrule
Single sample (default) & 47.8 ($\pm$ 0.4) & 80.1 ($\pm$ 0.4) \\
Five samples & 49.0 ($\pm$ 0.2) & 80.8 ($\pm$ 0.3) \\
\bottomrule
\end{tabular}
\end{table}

\subsection{Backbone Generalization}
\label{app:backbone_generalization}

To assess whether the gains from VEM extend across different LM backbones, we report MIMIC-IV results on readmission, LOS, and drug recommendation for four backbones spanning both encoder-only models (BCMBERT and BioBERT) and decoder-only models (Meerkat and BioMistral). Table~\ref{tab:backbone_generalization_full} shows that VEM consistently improves performance over the corresponding LM-only baseline across all tasks and all reported metrics. The gains are especially pronounced for BCMBERT and Meerkat, where VEM improves readmission AUPRC by 5.0--5.8 points, LOS AUROC by 5.8--6.2 points, and drug recommendation Jaccard by 3.4--4.6 points. Even for the stronger BioBERT and BioMistral baselines, VEM remains consistently beneficial; for example, it improves BioBERT readmission AUPRC from 45.4\% to 47.8\% and BioMistral drug recommendation F1 from 58.3\% to 62.4\%. Overall, these results indicate that Patients-like-me is not tied to a specific LM architecture and generalizes well across both encoder-only and decoder-only biomedical backbones.

\begin{table*}[t]
\centering
\caption{\textbf{Generalization of PLM across LM backbones on MIMIC-IV.} We evaluate PLM with the VEM algorithm across four backbones on readmission, LOS, and drug recommendation. VEM consistently improves performance over the corresponding LM-only baseline, showing that PLM generalizes across backbones.}
\label{tab:backbone_generalization_full}
\scriptsize
\resizebox{\textwidth}{!}{%
\begin{tabular}{llccccccc}
\toprule
\textbf{Backbone} & \textbf{Training} &
\multicolumn{2}{c}{\textbf{Readmission}} &
\multicolumn{2}{c}{\textbf{LOS}} &
\multicolumn{3}{c}{\textbf{Drug Recommendation}} \\
\cmidrule(lr){3-4}\cmidrule(lr){5-6}\cmidrule(lr){7-9}
& & \textbf{AUPRC} & \textbf{AUROC}
& \textbf{AUROC} & \textbf{F1}
& \textbf{AUPRC} & \textbf{F1} & \textbf{Jaccard} \\
\midrule
\rowcolor{methodgreen}
BCMBERT-396M & VEM
& \textbf{48.6} & \textbf{80.6} & \textbf{82.8} & \textbf{35.0} & \textbf{75.4} & \textbf{64.8} & \textbf{48.9} \\
        & LM-only
& 43.6 & 74.5 & 77.0 & 30.7 & 70.2 & 61.4 & 45.5 \\
\midrule
\rowcolor{methodgreen}
Meerkat-8B & VEM
& \textbf{49.7} & \textbf{80.3} & \textbf{85.3} & \textbf{35.6} & \textbf{77.1} & \textbf{65.7} & \textbf{50.4} \\
           & LM-only
& 43.9 & 75.4 & 79.1 & 30.9 & 71.3 & 60.7 & 45.8 \\
\midrule
\rowcolor{methodgreen}
BioBERT-300M & VEM
& \textbf{47.8} & \textbf{80.1} & \textbf{80.5} & \textbf{33.8} & \textbf{75.1} & \textbf{64.2} & \textbf{48.5} \\
             & LM-only
& 45.4 & 77.4 & 78.1 & 33.7 & 73.1 & 61.7 & 46.8 \\
\midrule
\rowcolor{methodgreen}
BioMistral-7B & VEM
& \textbf{48.3} & \textbf{78.5} & \textbf{84.8} & \textbf{34.5} & \textbf{74.7} & \textbf{62.4} & \textbf{47.5} \\
               & LM-only
& 42.9 & 74.7 & 82.1 & 31.3 & 71.9 & 58.3 & 45.2 \\
\bottomrule
\end{tabular}%
}
\end{table*}

\subsection{Efficiency Analysis}
\label{subsec:efficiency_analy}

\begin{table}[t]
\centering
\caption{\textbf{Efficiency on MIMIC-IV with Meerkat-8B.} Time is hours per epoch, memory is peak GPU memory (GB), and metrics are AUPRC/F1/Jaccard (\%).}
\label{tab:efficiency_cost}
\footnotesize
\setlength{\tabcolsep}{3.5pt}
\renewcommand{\arraystretch}{1.12}
\begin{tabularx}{0.98\linewidth}{@{}lYYYYY@{}}
\toprule
\textbf{Training} & \textbf{Time (h)} & \textbf{Memory (GB)} & \textbf{AUPRC} & \textbf{F1} & \textbf{Jaccard} \\
\midrule
\rowcolor{methodgreen}
\textbf{VEM} & 2.55 & 80.4 & 77.1 & 65.7 & 50.4 \\
Alternating & 2.52 & 80.8 & 72.4 & 60.9 & 47.1 \\
E2E & 2.61 & 91.1 & 70.4 & 60.2 & 45.4 \\
2-stage & 2.63 & 80.8 & 73.2 & 63.6 & 47.9 \\
LM-only & 2.45 & 76.8 & 71.3 & 60.7 & 45.8 \\
\bottomrule
\end{tabularx}
\vspace{-0.4\baselineskip}
\end{table}

Table~\ref{tab:efficiency_cost} reports runtime and peak GPU memory on MIMIC-IV with the Meerkat-8B backbone. Relative to LM-only, VEM increases training time from 2.45 to 2.55 hours per epoch and peak memory from 76.8 to 80.4 GB, i.e., only a 4.1\% runtime increase and a 4.7\% memory increase. It also has lower runtime than E2E and 2-stage training, while maintaining comparable memory usage, showing that VEM delivers consistent performance gains with only modest computational overhead. Compared with alternating optimization, VEM requires nearly identical training time (2.55 vs.\ 2.52 hours per epoch) and slightly less GPU memory (80.4 vs.\ 80.8 GB), while improving AUPRC, F1, and Jaccard by 4.7, 4.8, and 3.3 \%, respectively.

\clearpage
\newpage

\end{document}